\documentclass[11pt,reqno]{amsart}

\usepackage[utf8]{inputenc}
\usepackage[english]{babel}
\usepackage{latexsym}
\usepackage{amssymb,amsfonts}
\usepackage{mathtools}
\usepackage{amsthm}
\usepackage{amsmath}
\usepackage{mathrsfs}
\usepackage{amstext}
\usepackage{graphicx}  
\usepackage{amsfonts}
\usepackage{pdfpages}
\usepackage{epic}
\usepackage{fullpage}
\usepackage{color}
\usepackage{curves}
\usepackage{subfigure}
\usepackage{setspace}
\usepackage{anysize}
\usepackage{rotating}
\usepackage{enumerate} 
\usepackage{hyperref}
\usepackage{bbm}
\usepackage{bm}
\usepackage{float}
\usepackage{microtype}
\usepackage{multirow}
\usepackage{makecell}
\usepackage{wrapfig}

\DisableLigatures{encoding = *, family = * }

\numberwithin{equation}{section}

\newtheorem{theorem}{Theorem}[section] 
\newtheorem{definition}[theorem]{Definition}
\newtheorem{corollary}[theorem]{Corollary}
\newtheorem{proposition}[theorem]{Proposition}
\newtheorem{lemma}[theorem]{Lemma}

\def\R{{\mathbb R}}

\newcommand{\MC}[1]{\mathcal{#1}}

\title{Fair feature attribution for multi-output prediction: a Shapley-based perspective}
\subjclass[2020]{62R07, 68Q32, 68T01, 91A12}
\keywords{Multi-output learning, model interpretability, axiomatic characterization, fair allocation, SHAP values}

\author{Umberto Biccari\textsuperscript{\,$\ast$}} 
\address{\textsuperscript{$\ast$}\, Chair of Computational Mathematics, DeustoTech, University of Deusto, Avenida de las Universidades 24, 48007 Bilbao, Basque Country, Spain
         \newline \indent \textsuperscript{$\dagger$}\, Chair for dynamics, control, machine learning, and numerics (Alexander Von Humboldt-Professorship), Department of Mathematics, Friedrich-Alexander-Universit\"at Erlangen-N\"urnberg, 91058, Erlangen, Germany.
         \newline \indent \textsuperscript{$\S$}\, Universidad Aut\'onoma de Madrid, Departamento de Matem\'aticas, Ciudad Universitaria de Cantoblanco, 28049 Madrid, Spain.
         \newline \indent \textsuperscript{$\ddagger$}\, ATLAS Molecular Pharma, Parque Tecnol\'ogico de Bizkaia, Ed. 800, 48160 Derio, Spain.
         \newline \indent \textsuperscript{$\P$}\, Precision Medicine and Metabolism Laboratory, CIC bioGUNE, Basque Research and Technology Alliance, Parque Tecnol\'ogico de Bizkaia, Ed. 800, 48160 Derio, Spain.
         \newline \indent \textsuperscript{$\|$}\, CIBERehd, Madrid, 28025, Spain.
} 

\email{umberto.biccari@deusto.es}
\thanks{This project has received funding from the European Research Council (ERC) under the European Union's Horizon 2030 research and innovation programme (grant agreement NO: 101096251-CoDeFeL). EZ was partially supported by the Alexander von Humboldt Professorship program; the European Union’s Horizon Europe MSCA project ModConFlex (HORIZON-MSCA-2021-DN-01(project 101073558); the Transregio 154 Project “Mathematical Modelling, Simulation and Optimization Using the Example of Gas Networks” of the DFG; the AFOSR 24IOE027 project; SURE-AI: The Norwegian Centre for Sustainable, Risk-Averse, and Ethical AI grant 357482, Research Council of Norway; and the Madrid Government–UAM Agreement for the Excellence of the University Research Staff in the context of the V PRICIT (Regional Programme of Research and Technological Innovation). UB and EZ were partially supported by the Grant PID2023-146872OB-I00-DyCMaMod of MICIU (Spain) and by the COST Action CA24122 - multiscale Stochastics, Patterns, and Analysis of Combinatorial Environments. UB and RM were partially supported by the COST Action CA24136 - Interactions between Control Theory and Machine Learning. AI, JMM and \'OM were supported by the Elkartek grants bg2023 and bg2025 of the Basque Government and by the grants PID2024-160696OB-I00, CPP2023-01047 and CPP2022-009738 of MICIU (Spain)}

\author{Alain Ib\'a\~{n}ez de Opakua\textsuperscript{\,$\ddagger$}}
\email{aibanez@cicbiogune.es}

\author{Jos\'e Mar\'ia Mato\textsuperscript{\,$\P$\,$\|$}}
\email{director@cicbiogune.es}

\author{\'Oscar Millet\textsuperscript{\,$\ddagger$\,$\P$\,$\|$}}
\email{omillet@cicbiogune.es}

\author{Roberto Morales\textsuperscript{\,$\ast$}}  
\email{roberto.morales@deusto.es}

\author{Enrique Zuazua\textsuperscript{\,$\ast$\,$\dagger$\,$\S$}}  
\email{enrique.zuazua@fau.de, enrique.zuazua@deusto.es, enrique.zuazua@uam.es}

\begin{document}

\begin{abstract}
In this article, we provide an axiomatic characterization of feature attribution for multi-output predictors within the Shapley framework. While SHAP explanations are routinely computed independently for each output coordinate, the theoretical necessity of this practice has remained unclear. By extending the classical Shapley axioms to vector-valued cooperative games, we establish a rigidity theorem showing that any attribution rule satisfying efficiency, symmetry, dummy player, and additivity must necessarily decompose component-wise across outputs. Consequently, any joint-output attribution rule must relax at least one of the classical Shapley axioms. This result identifies a previously unformalized structural constraint in Shapley-based interpretability, clarifying the precise scope of fairness-consistent explanations in multi-output learning. 
Numerical experiments on a biomedical benchmark illustrate that multi-output models can yield computational savings in training and deployment, while producing SHAP explanations that remain fully consistent with the component-wise structure imposed by the Shapley axioms.
\end{abstract}

\maketitle

\section{Introduction and motivations}

Feature attribution is a central problem in the interpretability of Machine Learning (ML) models: given a prediction, how should the contribution of each input feature be fairly and consistently quantified? This question is especially relevant in high-stakes applications such as healthcare, finance, and public decision-making \cite{aldughayfiq2023explainable,arrieta2020explainable,bernard2023explainable,samek2017explainable}, and is explicitly reflected in regulatory frameworks like the European Union's GDPR, which requires transparency and ``meaningful information about the logic involved'' in algorithmic decisions \cite[Articles 13-15]{GDPR}.

Among existing approaches, SHAP (SHapley Additive exPlanations, \cite{arunika2024survey,lundberg2017unified,molnar2020interpretable,mosca2022shap,zhang2021survey}) has emerged as a reference framework due to its axiomatic foundation in the Shapley value from cooperative game theory \cite{shapley1953value}. This axiomatic foundation provides a principled notion of fairness and consistency for feature attribution and distinguishes SHAP from heuristic or model-dependent interpretability techniques.

While the theory of SHAP is fully understood for scalar-valued predictors, modern ML applications increasingly rely on multi-output models $f:\mathbb{R}^n \to \mathbb{R}^m$, which jointly predict several correlated targets \cite{harutyunyan2019multitask,shao2020multi,zhang2018deep}. In such settings, SHAP is routinely applied by computing explanations independently for each output coordinate. This practice is widespread and computationally convenient; however, it lacks a rigorous axiomatic justification: it remains unclear whether component-wise attribution is a modeling convention or a structural consequence of the Shapley principles.

Given this premise, the central question addressed in this paper is the following: do the classical Shapley axioms admit feature attribution rules for vector-valued predictors that couple different output coordinates, or do they instead force output-wise separation?

Our main theoretical contribution answers this question. In fact, Theorem \ref{thm:coordinatewise_decomposition} shows that any Shapley-consistent attribution rule is uniquely determined and must decompose component-wise across outputs. In particular, no joint-output Shapley explanation can exist without abandoning at least one of the classical axioms. For clarity, we summarize this contribution in the following informal statement.

\medskip
\noindent\textbf{Main result (informal).}
\emph{
Let $f:\mathbb{R}^n \to \mathbb{R}^m$ be a multi-output predictor and model its feature attribution problem as a vector-valued cooperative game. If an attribution rule satisfies the classical Shapley axioms extended to vector-valued payoffs, then it is uniquely determined and necessarily decomposes component-wise across outputs. In particular, there exists no joint-output Shapley attribution rule: any method that couples different outputs must necessarily violate at least one of the classical Shapley axioms.
}
\medskip

At first sight, one might conjecture that extending Shapley values to vector-valued predictors simply amounts to applying the scalar theory coordinate-wise. However, this conclusion is not a formal consequence of the classical axioms. In the vector-valued setting \cite{Allal2025,mosca2022shap,zhang2021survey}, the payoff space admits linear couplings across coordinates, and therefore, a priori, attribution rules could redistribute contributions between outputs while still satisfying symmetry or dummy conditions.

The main novelty of our result is to show that this apparent flexibility is incompatible with the classical Shapley axioms extended to $\mathbb{R}^m$: once efficiency and additivity are imposed directly in the vector space $\mathbb{R}^m$, any cross-coordinate coupling becomes impossible. Thus, component-wise SHAP explanations are not merely a convenient convention, but the unique consequence of extending the classical Shapley axioms verbatim to multi-output prediction.

To our knowledge, the rigidity phenomenon has not been made explicit in the multi-output interpretability literature unless the direct extension of the classical Shapley axioms to vector-valued pay-offs.


\subsection{Mathematical origins of SHAP analysis and the axiomatic approach to interpretability}

SHAP is grounded in cooperative game theory, where the problem of fairly allocating a collective payoff among individual contributors was formalized by Shapley in his seminal work \cite{shapley1953value}. In this framework, a finite set of players forms coalitions, each coalition being assigned a numerical value representing the payoff it can achieve through cooperation. The central question is how to distribute the value of the grand coalition among players in a manner that is fair, consistent, and independent of arbitrary conventions.

Shapley addressed this question through an axiomatic characterization of allocation rules. The resulting Shapley value is uniquely determined by four principles: \textit{efficiency}, requiring that the total payoff be exactly distributed among players; \textit{symmetry}, requiring identical treatment of players with identical contributions; the \textit{dummy player} axiom, assigning zero value to players that contribute nothing; and \textit{additivity}, enforcing linear consistency with respect to the superposition of independent games. These axioms uniquely determine a single allocation rule, which admits explicit representations in terms of average marginal contributions over all coalitions.

This axiomatic construction was later adapted to model interpretability in ML, giving rise to SHAP values \cite{lundberg2017unified}. In this setting, input features play the role of players, and the characteristic function of the game is defined via conditional expectations of the model output. Feature attributions are then computed as Shapley values of the induced game, ensuring that explanations satisfy the same fairness and consistency principles as in the original cooperative framework.

The appeal of SHAP lies precisely in this axiomatic grounding. Interpretability in ML admits many possible operational definitions, often leading to method-dependent or model-dependent explanations \cite{arrieta2020explainable,molnar2020interpretable,samek2017explainable}. An axiomatic approach shifts the focus from ``how explanations are computed'' to ``which properties they are required to satisfy'', providing a principled and model-agnostic foundation for feature attribution. This perspective underlies the analysis developed in the present work and motivates the extension of Shapley theory to the multi-output setting considered in the following sections.

\subsection{Biomedical motivation for multi-output modeling and interpretability} 

The present work is motivated by challenges arising in biomedical ML, where interpretability is often essential for scientific plausibility, clinical relevance, and downstream decision-making. In this domain, predictive models are routinely used to support biological discovery and clinical research, and feature attributions are expected to be interpretable in light of existing physiological or biochemical knowledge \cite{aldughayfiq2023explainable,bernard2023explainable,vellido2020importance}.

Biomedical prediction problems naturally give rise to multi-output models, as biological systems are characterized by collections of correlated and interdependent quantities \cite{moreira2019comprehensive}. A representative example is provided by metabolomic and biological aging clocks \cite{ibanez2025metabolomic,wen2025refining}. While foundational models in this domain typically operate as single-output predictors estimating a global biological age, recent advances have shifted towards multi-output architectures to better capture physiological complexity. For instance, modern approaches deploy multi-task learning to predict aging across multiple organ systems simultaneously (e.g. brain, heart, and liver) rather than a single systemic age \cite{Tian2023HeterogeneousAA}. Similarly, joint prediction models incorporate auxiliary tasks to improve latent representation robustness \cite{Eraslan2019DeepLN,Zhou2013ModelingDP}. Furthermore, deep learning frameworks in multi-omics are increasingly used to forecast entire trajectories of biomarker panels jointly \cite{Song2018DeepR}. 


In the context of metabolomic and aging-related prediction tasks, several composite indices have been proposed to summarize systemic metabolic dysregulation. Among them, the MetSCORE index \cite{gil2024metscore} has been recently introduced as a continuous, metabolomics-based score designed to quantify metabolic risk by integrating information from lipid, amino acid, and energy metabolism pathways. Unlike classical binary or threshold-based metabolic syndrome definitions, MetSCORE provides a smooth and data-driven risk measure that is well suited for regression-based learning and interpretability analyses. This makes it particularly attractive as an output variable in multi-output predictive models, where heterogeneous yet biologically related targets are jointly learned.

In these settings, feature attribution plays a central role. Identifying which variables contribute to changes in specific biomarkers or physiological indicators is essential for biological interpretation, hypothesis generation, and validation against prior knowledge. At the same time, the presence of multiple correlated outputs raises nontrivial interpretability questions, particularly regarding whether and how explanations should reflect inter-output dependencies.

Although the theoretical results developed in this paper are fully general, biomedical applications provide a concrete and practically relevant setting in which multi-output prediction and interpretability are both indispensable. For this reason, the numerical experiments in Section \ref{section:Numerical:experiments} focus on a representative biomedical multi-output regression problem, which is used to illustrate the computational and interpretative implications of Shapley-based feature attribution in the multi-output setting.


\subsection{SHAP in context: comparison with related interpretability methods}

A broad range of interpretability methods has been proposed in the ML literature, reflecting the diversity of notions of explanation and attribution. These approaches differ not only in how feature relevance is computed, but also in the implicit structural or axiomatic assumptions they make when dealing with multiple outputs or objectives.

A first class of methods consists of \textbf{scalarization}- or \textbf{preference}-\textbf{based approaches}, in which feature relevance is derived from local sensitivities or from aggregating effects along prescribed paths or baselines. Local sensitivity methods \cite{simonyan2013deep,selvaraju2017grad} attribute importance to input features using derivatives of the model output with respect to the input, thereby privileging infinitesimal variations around a given point. Closely related path-integrated and reference-based methods \cite{sundararajan2017axiomatic,shrikumar2017learning} accumulate such sensitivities along a chosen path from a baseline input. In both cases, vector-valued effects are implicitly reduced to scalar quantities through choices of reference points, paths, or scaling conventions, which act as preference structures.

A second family includes \textbf{set}-\textbf{valued} or \textbf{Pareto}-\textbf{type approaches}, which assess feature relevance through systematic perturbations or local approximations rather than enforcing a unique additive allocation. In this context, perturbation methods \cite{zeiler2014visualizing,ribeiro2016should} evaluate importance by measuring changes in model outputs under feature removal, masking, or by fitting local surrogate models. These approaches are typically model-agnostic but depend on sampling strategies and locality assumptions, and therefore do not yield a unique, globally additive attribution across features.

Finally, a third class is formed by \textbf{redistribution or cost}-\textbf{sharing variants}, which explain predictions by propagating or redistributing relevance through the internal structure of a model. Propagation and decomposition methods \cite{bach2015pixel,montavon2017explaining,zhang2018top} allocate relevance across layers according to predefined conservation rules, often tailored to specific architectures. Within this family, attention-based methods \cite{abnar2020quantifying,clark2019does} interpret learned attention weights as indicators of feature importance, capturing patterns of information routing rather than enforcing full vector-valued efficiency or additivity.

A particularly prominent instance of this approach arises in Transformer architectures, where token importance is frequently inferred directly from attention weights. In such models, attention mechanisms define how information is mixed and propagated across tokens and layers through linear aggregation operators followed by nonlinear transformations and residual connections. However, several studies have demonstrated that attention scores do not necessarily reflect causal or marginal contribution to the final prediction. In particular, \cite{jain2019attention} shows that attention weights can be decorrelated from gradient-based importance measures without affecting model outputs, while \cite{serrano2019attention} observes that removing high-attention tokens does not always produce significant output changes. A more nuanced discussion is provided in \cite{wiegreffe2019attention}, where attention is argued to serve as an explanation only under additional structural assumptions. Quantitative analyses of attention flow across layers \cite{abnar2020quantifying} and circuit-based interpretations of Transformer architectures \cite{chefer2021transformer} further emphasize that attention primarily captures internal representation dynamics rather than global output contribution.

This distinction highlights a fundamental difference between information routing and output attribution. Attention mechanisms describe how representations are constructed internally, whereas Shapley-based methods quantify how input variables contribute to the realized prediction via conditional expectations of the predictor itself. In particular, SHAP values are uniquely characterized by axioms of efficiency, symmetry, dummy player, and additivity \cite{lundberg2017unified}, which impose global consistency constraints at the output level. This difference is especially relevant in multi-output settings. Attention mechanisms may implicitly couple outputs through shared latent representations, while our main result (see Theorem \ref{thm:coordinatewise_decomposition}) establishes that any attribution rule satisfying the classical Shapley axioms must remain output-separable. Hence, attention-based token importance and Shapley-based attribution address fundamentally different interpretability questions: the former concerns internal computational structure, whereas the latter concerns fair allocation of predictive contribution in the sense of cooperative game theory.

\subsection{Paper's contributions}
While vector-valued cooperative games and multicriteria allocation problems have been studied extensively in game theory, existing solution concepts typically rely on preference relations, scalarization procedures, or ordering cones in order to compare vector payoffs. These frameworks do not preserve the classical Shapley axioms verbatim in the ambient vector space.


The closest related works in the interpretability literature either apply SHAP independently to each output coordinate without axiomatic justification, or introduce scalarization or preference-based modifications that depart from the classical Shapley framework. None of these approaches establishes whether component-wise decomposition is a structural necessity or merely a modeling choice. The present paper resolves this question by proving that, under the classical Shapley axioms extended to vector-valued payoffs, output-wise separation is mathematically unavoidable.

\medskip 
\noindent In this framework, our main contributions are as follows this work are:
\begin{itemize}
    \item[(C1)] We formulate feature attribution for multi-output predictors as a vector-valued cooperative game and prove a rigidity theorem (Theorem \ref{thm:coordinatewise_decomposition}) showing that the classical Shapley axioms enforce output-wise separation.
    \item[(C2)] We derive a uniqueness characterization (Theorem \ref{thm:existence:uniqueness:Shapley:formula}) establishing that the only admissible vector-valued Shapley operator is the coordinate-wise application of the scalar Shapley value.
    \item[(C3)] We provide quantitative stability estimates for the vector-valued Shapley operator under perturbations of the underlying predictor (Proposition \ref{prop:estimates:Phi}).
    \item[(C4)] We illustrate the theoretical findings through a biomedical multi-output learning example, showing that multi-output modeling can reduce computational cost while preserving per-output SHAP explanations.
\end{itemize}

\subsection{Paper's organization}

The remainder of the paper is organized as follows. Section \ref{sec:SHAP_math} introduces the mathematical framework underlying Shapley values, recalling the classical scalar formulation and developing its extension to vector-valued settings. In particular, this section contains our main contributions that formalize the axiomatic structure of vector-valued Shapley operators and establish their fundamental properties. Section \ref{section:Numerical:experiments} presents numerical experiments illustrating our theoretical results. These experiments are used to examine the computational and interpretative behavior of SHAP explanations for multi-output models. Section \ref{sec:conclusion} concludes the paper with a summary of the main findings and a discussion of several open problems and directions for future research. Finally, Appendices \ref{appendix:proofs:main:results} and \ref{app:B} contain the technical proofs and additional mathematical details supporting the results presented in the main text.

\section{SHAP values for model interpretability}\label{sec:SHAP_math}

Given $m,n\in\mathbb N$, a predictor $f:\mathbb{R}^n \to \mathbb{R}^m$, and an input instance $x=(x_1,\ldots,x_n)\in\mathbb{R}^n$, feature attribution aims to quantify the contribution of each input variable $x_i$ to the prediction $f(x)$. In SHAP-based explanations, attributions are defined relative to a reference output, typically the expected model prediction under a background distribution. The objective is to decompose the deviation of $f(x)$ from this baseline into feature-wise contributions, in a manner that is independent of the specific model architecture and depends only on conditional expectations of the predictor.

A central aspect of SHAP is that feature contributions are evaluated relative to subsets of other features: the relevance of a variable depends on which additional variables are already known. This naturally leads to representing available information by subsets of the feature index set $[n]=\{1,\dots,n\}$, and associating to each subset a quantity capturing its predictive content.

This viewpoint admits a natural formulation in the language of cooperative game theory. Input features are interpreted as players, subsets of features as coalitions, and the predictive content of each coalition as a payoff. Feature attribution then corresponds to distributing the total predictive value among the players according to principled allocation rules.

The following subsections formalize this correspondence. Section \ref{subsec:SHAP_scalar} recalls the classical scalar Shapley framework, Section \ref{subsec:SHAP_vector} develops its extension to vector-valued payoffs, Section \ref{subsec:SHAP_discussion} contains an abridged discussion of the axiomatic structure of Shapley theory, and Section \ref{subsec:SHAP_ML} connects that theory to SHAP explanations for ML models.

\subsection{Scalar cooperative games and the Shapley value}\label{subsec:SHAP_scalar} 

We recall the classical cooperative game formulation underlying Shapley values in the scalar-output case \cite{shapley1953value}, which will serve as a reference framework for the vector-valued extension developed later.

Let $\MC P([n])$ denote the power set of $[n] = \{1,\dots,n\}$. In the context of cooperative game theory, each subset $S\in \MC P([n])$ represents a coalition of players joining forces to generate a collective outcome. To describe this process mathematically, we must specify, for every possible coalition, the total benefit it can obtain through cooperation. 

\begin{definition}\label{def:characteristic}
A \textbf{characteristic function} (or \textbf{score function}) is a map $v: \MC P([n]) \to \R$ which assigns to each coalition $S \in \MC P([n])$ a real number $v(S)$, representing the total payoff that the members of $S$ can jointly achieve by cooperating among themselves. A \textbf{cooperative game} is a pair $([n],v)$, where $v$ is a characteristic function over $\MC P([n])$.
\end{definition}

\noindent We denote by 
\begin{align}\label{def:mathcal:G:n}
    \MC G_{[n]}\coloneqq \Big\{v:\MC P([n])\to \R\,:\, v(\varnothing)=0\Big\},
\end{align}
the vector space of all characteristic functions satisfying the normalization condition $v(\varnothing) = 0$. This condition reflects the convention that no value is produced in the absence of players.

The central question in cooperative game theory is how to allocate the total payoff $v([n])$ among individual players in a fair and consistent manner. This motivates introducing the following notion of value operator.

\begin{definition}\label{def:value_operator}
A \textbf{value operator} on the players set $[n]$ is a mapping 
\begin{align*}
    \Phi:\MC G_{[n]}\to \R^n,\quad v\mapsto \big(\phi_1(v),\ldots, \phi_n(v)\big).
\end{align*}
that assigns to each game $v\in\MC G_{[n]}$ a vector of individual payoffs $(\phi_1(v), \ldots, \phi_n(v))$, where $\phi_i(v)\in\R$ represents the contribution attributed to player $i$.
\end{definition}

In broad terms, a value operator translates the abstract information contained in the characteristic function $v$ into a concrete allocation of gains among the players. To determine which allocation is most reasonable, Shapley proposed in \cite{shapley1953value} an axiomatic characterization of such value operators, formalizing minimal requirements of fairness and consistency.

\begin{definition}[Shapley operator]\label{def:SHAP_operator}
A value operator $\Phi: \MC G_{[n]} \to \mathbb{R}^n$ is called a \textbf{Shapley operator} if it satisfies the following axioms:
\begin{itemize}
    \item[(i)] \textbf{Efficiency.} The total payoff is fully distributed among the players: for all $v\in\MC G_{[n]}$,
    \begin{align*}
        \sum_{i\in [n]} \phi_i(v)=v([n]).
    \end{align*}
    \item[(ii)] \textbf{Symmetry.} If two players $i,j\in [n]$ contribute identically to every coalition, then they receive the same payoff: for all $v\in\MC G_{[n]}$,
    \begin{align*}
        v(S\cup \{i\})=v(S\cup \{j\})\quad\text{for all } S\subseteq \big([n]\setminus \{i,j\}\big) \Rightarrow \phi_i(v)=\phi_j(v).
    \end{align*}
    \item[(iii)] \textbf{Dummy (null player).} If a player $i\in [n]$ does not affect the value of any coalition, then its payoff is zero: for all $v\in\MC G_{[n]}$,
    \begin{align*}
        v(S\cup \{i\})=v(S)\quad \text{for all } S\subseteq \big([n]\setminus \{i\}\big) \Rightarrow \phi_i(v)=0.
    \end{align*}
    \item[(iv)] \textbf{Additivity.} The value operator is linear with respect to the game: for all $v,w\in \MC G_{[n]}$ and $\alpha,\beta\in \R$, $\Phi(\alpha v+\beta w)=\alpha \Phi(v) + \beta \Phi(w)$.
\end{itemize}
\end{definition}

The four Shapley axioms of Definition \ref{def:SHAP_operator} together ensure that the allocation rule behaves sensibly: the total reward is fully distributed (efficiency), identical players are treated identically (symmetry), irrelevant players receive nothing (dummy), and the rule is compatible with combining independent games (additivity). Remarkably, these simple principles are strong enough to determine a unique allocation rule, as established by the classical Shapley theorem.

\begin{theorem}[{\cite[Theorem 1]{shapley1953value}}]\label{thm:SHAP_scalar}
There exists a unique value operator $\Phi:\MC G_{[n]} \to\R^n$ satisfying the axioms of efficiency, symmetry, dummy, and additivity. For each game $v\in\MC G_{[n]}$ and player $i\in[n]$, this operator is given by 
\begin{align}\label{formula:Shapley:permu}
    \phi_i(v)=\dfrac{1}{n!} \sum_{\pi \in \mathfrak{S}_{[n]}} \Big[ v\big(P_i(\pi)\cup \{i\}\big) - v\big(P_i(\pi)\big) \Big],
\end{align}
where $\mathfrak{S}_{[n]}$ is the set of all possible permutations of the players $[n]$ and $P_i(\pi)$ denotes the set of players preceding $i$ in permutation $\pi\in\mathfrak{S}_{[n]}$.
\end{theorem}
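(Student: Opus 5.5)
The plan is the classical two-part argument: show that the permutation formula \eqref{formula:Shapley:permu} satisfies the four axioms (existence), then show that additivity together with the other three axioms pins down any value operator on a basis of $\MC G_{[n]}$ (uniqueness). Note that additivity as stated in Definition \ref{def:SHAP_operator} already means full linearity over $\R$. This makes the basis argument clean, with no continuity or rational-scaling detour needed.

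\textbf{Existence.} Define $\phi_i$ by \eqref{formula:Shapley:permu}.
\begin{itemize}
\item Linearity in $v$ is immediate, since the formula is a fixed linear combination of the values $v(S)$.
\item Efficiency: for each fixed $\pi$, summing the marginal contributions over $i$ telescopes along the order $\pi$ to $v([n])-v(\varnothing)=v([n])$. Averaging over the $n!$ permutations keeps this identity.
\item Dummy: if $i$ is null, every term $v(P_i(\pi)\cup\{i\})-v(P_i(\pi))$ vanishes.
\item Symmetry: use the bijection $\pi\mapsto\tau\circ\pi$ with $\tau$ the transposition of $i$ and $j$. It maps the term for $i$ in $\pi$ to the term for $j$ in $\tau\circ\pi$.
\end{itemize}
The symmetry step needs a little care when $j\in P_i(\pi)$. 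In that case one writes $P_i(\pi)=T\cup\{j\}$ with $T\subseteq[n]\setminus\{i,j\}$. The hypothesis $v(T\cup\{i\})=v(T\cup\{j\})$ then shows that the marginal contributions of $i$ in $\pi$ and of $j$ in $\tau\circ\pi$ coincide. Summing over the bijection gives $\phi_i(v)=\phi_j(v)$.

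\textbf{Uniqueness.} Use the unanimity games $u_T(S)=1$ if $T\subseteq S$ and $0$ otherwise, for $\varnothing\neq T\subseteq[n]$.
\begin{itemize}
\item Basis claim: these $2^n-1$ games form a basis of $\MC G_{[n]}$, whose dimension is $2^n-1$. I would prove this by Möbius inversion, writing $v=\sum_{T\neq\varnothing}c_T u_T$ with $c_T=\sum_{R\subseteq T}(-1)^{|T|-|R|}v(R)$. Verifying this expansion is the main technical step; one checks it via $\sum_{R\subseteq S}\sum_{R\subseteq T\subseteq S}(-1)^{|T\setminus R|}v(R)=v(S)$, using that the inner sum vanishes unless $R=S$.
\item Value on each basis game: for an operator $\Phi$ satisfying the axioms, consider $c\,u_T$ with $c\in\R$.
  \begin{itemize}
  \item Players $i\notin T$ are null, so $\phi_i(c\,u_T)=0$.
  \item Any two players in $T$ are symmetric, so they receive equal payoffs.
  \item Efficiency then forces $\phi_i(c\,u_T)=c/|T|$ for $i\in T$.
  \end{itemize}
\item Conclusion: linearity extends these values to every $v$, so $\Phi$ is determined. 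Since the permutation formula satisfies the axioms, it must be that operator.
\end{itemize}

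The main obstacle is the basis/Möbius step, which is where uniqueness actually comes from. The rest is bookkeeping. As a fallback, one can prove linear independence of the $u_T$ by evaluating on sets ordered by size and using triangularity, then conclude spanning from the dimension count.
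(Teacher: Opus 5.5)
Your proof is correct and follows the classical argument that the paper invokes by citing Shapley, which it also reproduces in vector-valued form in Appendix A (the existence part of Theorem \ref{thm:existence:uniqueness:Shapley:formula} and Step 1 of Theorem \ref{thm:coordinatewise_decomposition}): efficiency by telescoping, symmetry by relabeling, and uniqueness because dummy, symmetry and efficiency force $\phi_i(u_T)=\mathbf{1}_{i\in T}/|T|$ on unanimity games. The differences are minor: you verify symmetry directly through the transposition bijection rather than via $\phi_{\sigma(i)}(v^\sigma)=\phi_i(v)$, and you prove by M\"obius inversion that the $u_T$ span $\MC G_{[n]}$, which the paper simply assumes.
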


Equivalently, the Shapley value admits a combinatorial representation as a weighted sum of marginal contributions over all coalitions not containing player $i$, where the weights depend only on the coalition size.

\begin{theorem}[{\cite[Section 4.2, page 59]{roth1988expected}}]\label{thm:1dim:Shapley:values}
For every game $v\in\MC G_{[n]}$ and player $i\in [n]$, the Shapley value $\phi_i(v)$ given by \ref{formula:Shapley:permu} admits the equivalent combinatorial representation 
\begin{align}\label{formula:permu:SHAP:1d}
    \phi_i(v)\coloneqq\sum_{S\subseteq ([n]\setminus \{i\})} \dfrac{|S|! (n-|S|-1)!}{n!} \Big[v(S\cup \{i\}) - v(S) \Big].
\end{align}
\end{theorem}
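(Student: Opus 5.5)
The plan is to regroup the sum over permutations in \eqref{formula:Shapley:permu} according to the set of predecessors of $i$. For $\pi\in\mathfrak S_{[n]}$ the marginal contribution $v(P_i(\pi)\cup\{i\})-v(P_i(\pi))$ depends on $\pi$ only through $S=P_i(\pi)$, which is always a subset of $[n]\setminus\{i\}$. So we partition $\mathfrak S_{[n]}$ into the fibres
\[
A_S=\{\pi\in\mathfrak S_{[n]}:\ P_i(\pi)=S\}, \qquad S\subseteq [n]\setminus\{i\}.
\]
This rewrites \eqref{formula:Shapley:permu} as
\[
\phi_i(v)=\frac{1}{n!}\sum_{S\subseteq [n]\setminus\{i\}} |A_S|\,\big[v(S\cup\{i\})-v(S)\big].
\]
The problem then reduces to computing $|A_S|$.

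The key counting step is as follows. A permutation lies in $A_S$ exactly when, written as an ordering of the players, it has three blocks:
\begin{itemize}
\item the elements of $S$ in the first $|S|$ positions, in any order, giving $|S|!$ choices;
\item player $i$ in position $|S|+1$;
\item the remaining $n-|S|-1$ players afterwards, in any order, giving $(n-|S|-1)!$ choices.
\end{itemize}
This correspondence is a bijection between $A_S$ and pairs of orderings of $S$ and of $[n]\setminus(S\cup\{i\})$. Hence $|A_S|=|S|!\,(n-|S|-1)!$, and substituting gives exactly \eqref{formula:permu:SHAP:1d}.

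As a sanity check, we will verify that the fibres exhaust all permutations:
\[
\sum_{s=0}^{n-1}\binom{n-1}{s}\,s!\,(n-1-s)!=n\,(n-1)!=n!.
\]
We will also note that the identity holds for every $v\in\MC G_{[n]}$. It does not use the axioms at all: Theorem \ref{thm:SHAP_scalar} is invoked only to identify \eqref{formula:Shapley:permu} as the Shapley value.

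There is no serious obstacle here. The only point requiring care is to state the bijection cleanly, in particular to confirm that every $\pi$ lands in exactly one $A_S$ and that the two blocks are ordered independently.
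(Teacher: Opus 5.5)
Your proposal is correct and complete. The paper gives no proof of its own (it cites Roth), and your argument, which partitions the permutations by the predecessor set $P_i(\pi)=S$ and counts $|A_S|=|S|!\,(n-|S|-1)!$, is the standard derivation of this identity, so there is nothing to add.
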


This formulation is particularly convenient for both theoretical analysis and algorithmic implementation, and it forms the basis of SHAP values in ML \cite{lundberg2017unified}.

\subsection{Shapley operators for vector-valued games}\label{subsec:SHAP_vector}

The scalar formulation reviewed in Section \ref{subsec:SHAP_scalar} applies to cooperative games in which each coalition receives a single real-valued payoff. In many modern applications, however, cooperation produces several outcomes simultaneously. This motivates the study of vector-valued cooperative games, in which each coalition is assigned a payoff in a finite-dimensional vector space. Although multi-output learning and vector-valued prediction have been studied \cite{Allal2025,Xu2020}, these works do not provide an axiomatic characterization of feature attribution rules preserving the classical Shapley principles in the full vector space. In particular, they do not address whether cross-output attribution rules are compatible with efficiency and additivity imposed directly in $\mathbb{R}^m$. The goal of this section is precisely to answer this structural question.

At first sight, extending Shapley values from scalar to vector-valued games may appear straightforward, by simply applying the scalar theory component-wise. However, this construction is not a priori justified by the axioms themselves: vector-valued payoffs allow, in principle, for attribution rules that mix or couple different output coordinates. The purpose of this section is to show that such couplings are in fact impossible under the Shapley axioms. To formalize this, for every $m \in \mathbb{N}$ let us denote by
\begin{align*}
    \MC{G}_{[n]}^m \coloneqq\Big\{v : \MC P([n])\to\R^m\,:\, v(\varnothing)=0\Big\}
\end{align*}
the space of vector-valued characteristic functions satisfying the normalization condition $v(\varnothing) = 0$. Each element of $\MC G^m_{[n]}$ represents a cooperative system producing $m$ different payoffs at once. 

We begin by defining the class of operators that assign vector-valued payoffs to individual players.

\begin{definition}[Vector-valued value operator]\label{def:operator_vector}
A \textbf{vector}-\textbf{valued value operator} on the player set $[n]$ is a mapping $\Phi : \MC G^m_{[n]}\to (\mathbb{R}^{m})^n$ that assigns to each game $v \in \MC G^m_{[n]}$ a collection of payoff vectors $(\phi_1(v), \ldots, \phi_n(v))$, where $\phi_i(v) \in \mathbb{R}^m$ represents the contribution attributed to player $i$.
\end{definition}

This Definition \ref{def:operator_vector} specifies the class of admissible allocation rules but does not yet impose any fairness or consistency requirements. To single out a meaningful notion of attribution, we now extend the classical Shapley axioms to the vector-valued setting.

\subsection{Shapley axioms in the vector-valued setting}
We now formalize the axiomatic principles that govern admissible feature attribution rules in the vector-valued setting.
\begin{definition}[Vector-valued Shapley operator]\label{def:SHAP_operator_vector}
A vector-valued value operator $\Phi : \MC G^m_{[n]}\to \mathbb{R}^{mn}$ is called a \textbf{vector}-\textbf{valued Shapley operator} if it satisfies the axioms in Definition \ref{def:SHAP_operator}, with all equalities interpreted componentwise in $\mathbb R^m$.
\end{definition}

This definition extends the classical Shapley axioms to vector-valued payoffs, now applied component-wise in the output space $\mathbb{R}^m$. In particular, efficiency and additivity impose strong structural constraints on admissible operators. 

At this point, it may be tempting to assume that extending Shapley values to vector-valued games simply amounts to applying the scalar theory component-wise. However, this conclusion is not a priori implied by the axioms. Vector-valued payoffs allow, in principle, attribution rules that redistribute payoff mass across output coordinates. The following theorem shows that such couplings are in fact incompatible with the Shapley axioms, yielding a rigidity phenomenon.

\begin{theorem}[Rigidity of vector-valued Shapley operators]\label{thm:coordinatewise_decomposition}

Let $\Phi:\MC G^m_{[n]}\to \mathbb R^{mn}$ be a vector-valued Shapley operator satisfying the axioms in Definition \ref{def:SHAP_operator_vector}. Then, for every $k\in[m]$, the $k$-th coordinate of $\Phi(v)$ depends only on the $k$-th coordinate of $v$; equivalently, there exists a scalar value operator $\phi:\MC G_{[n]}\to\mathbb{R}^n$ such that for all $v\in \MC G^m_{[n]}$ and all $i\in[n]$,
\begin{align*}
    \Phi_i(v)=\big(\phi_i(\pi_1\circ v),\dots,\phi_i(\pi_m\circ v)\big),
\end{align*}
where for all $k\in[m]$, $\pi_k:\mathbb{R}^m\to\mathbb{R}$ denotes the coordinate projections in $\mathbb{R}^m$.
\end{theorem}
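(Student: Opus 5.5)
The plan is to reduce the vector-valued problem to $m$ scalar problems by exploiting additivity (which, as stated in Definition \ref{def:SHAP_operator}(iv), is in fact $\R$-linearity) together with a decomposition of every vector game along the standard basis $e_1,\dots,e_m$ of $\R^m$. For $w\in\MC G_{[n]}$ and $k\in[m]$, write $w e_k\in\MC G^m_{[n]}$ for the game $S\mapsto w(S)e_k$. Every $v\in\MC G^m_{[n]}$ decomposes uniquely as $v=\sum_{k=1}^m (\pi_k\circ v)\,e_k$, with each $\pi_k\circ v\in\MC G_{[n]}$ (normalization is preserved). By linearity, $\Phi_i(v)=\sum_k \Phi_i\big((\pi_k\circ v)e_k\big)$. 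It therefore suffices to understand $\Phi$ on games of the form $we_k$.

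The key step is to show that $\Phi_i(we_k)=\psi^{(k)}_i(w)\,e_k$ for a scalar operator $\psi^{(k)}$, and that $\psi^{(k)}$ is the Shapley value. Fix $k$ and $\ell\in[m]$, and define the scalar operator $\psi^{(k,\ell)}:\MC G_{[n]}\to\R^n$ by $\psi^{(k,\ell)}_i(w)=\pi_\ell\big(\Phi_i(we_k)\big)$. This operator is linear because $w\mapsto we_k$ and $\pi_\ell$ are linear. The axioms transfer as follows. If $i$ is a dummy for $w$, it is a dummy for $we_k$, so $\psi^{(k,\ell)}_i(w)=0$. If $i,j$ are symmetric in $w$, they are symmetric in $we_k$, so $\psi^{(k,\ell)}_i(w)=\psi^{(k,\ell)}_j(w)$. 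Efficiency in $\R^m$ gives $\sum_i\psi^{(k,\ell)}_i(w)=\pi_\ell(w([n])e_k)=\delta_{k\ell}\,w([n])$.

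For $\ell=k$, Theorem \ref{thm:SHAP_scalar} applies and gives $\psi^{(k,k)}=\phi$, the classical Shapley value. For $\ell\neq k$, $\psi^{(k,\ell)}$ is a linear operator satisfying symmetry, dummy, and the \emph{null} efficiency $\sum_i\psi_i(w)=0$. I must show that it vanishes identically. This is the main obstacle, because Theorem \ref{thm:SHAP_scalar} is stated only with efficiency normalized to $w([n])$. The cleanest route is to note that $\phi+\psi^{(k,\ell)}$ is linear and satisfies all four scalar axioms: symmetry and dummy are preserved under sums, and the efficiency sum is $w([n])+0$. By the uniqueness part of Theorem \ref{thm:SHAP_scalar}, $\phi+\psi^{(k,\ell)}=\phi$, hence $\psi^{(k,\ell)}=0$. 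A more hands-on alternative, if a self-contained argument were preferred, is to run Shapley's unanimity-game argument: write $w=\sum_{T\neq\varnothing}c_T u_T$ with $u_T(S)=\mathbf 1_{T\subseteq S}$. Dummy kills players outside $T$, symmetry equalizes those inside, and zero efficiency forces the common value to be $0$.

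Combining the two cases, $\Phi_i(we_k)=\phi_i(w)e_k$. Substituting back into the decomposition gives $\Phi_i(v)=\sum_k\phi_i(\pi_k\circ v)e_k=\big(\phi_i(\pi_1\circ v),\dots,\phi_i(\pi_m\circ v)\big)$. This is the claimed form, with a single scalar operator $\phi$ independent of $k$. In particular, the $k$-th coordinate of $\Phi(v)$ depends only on $\pi_k\circ v$.
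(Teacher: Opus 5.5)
Your proof is correct. Its skeleton is the paper's: decompose $v=\sum_k(\pi_k\circ v)e_k$, show that $\Phi_i(we_k)$ has no component outside $e_k$, identify the diagonal block, and recombine by additivity. Your main argument for the key no-leakage step is different, though.

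\textbf{The paper's route.} It proves no-leakage by hand on unanimity games. For $v_T=u_Te_k$, dummy and symmetry force $\Phi_i(v_T)=a\,\mathbf 1_{i\in T}$ for a common vector $a\in\mathbb R^m$. Vector efficiency then gives $|T|a=e_k$, so $a=|T|^{-1}e_k$. Linearity extends this to all games, and scalar uniqueness is invoked to conclude that every diagonal block is the same Shapley operator.

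\textbf{Your route.} You use Theorem~\ref{thm:SHAP_scalar} as a black box. The off-diagonal block $\psi^{(k,\ell)}$ is linear, respects symmetry and dummy, and has null efficiency. Hence $\phi+\psi^{(k,\ell)}$ satisfies all four axioms, and uniqueness forces $\psi^{(k,\ell)}=0$.

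\textbf{What each buys.} Your argument is shorter and more modular. It treats diagonal and off-diagonal blocks through the same theorem. It uses only that symmetry, dummy and additivity are linear constraints, that efficiency is a single affine normalization, and that the scalar solution is unique. So it transfers unchanged to any value characterized uniquely by axioms of that shape. The paper's argument is self-contained and more explicit. The single computation $\Phi_i(u_Te_k)=|T|^{-1}\mathbf 1_{i\in T}\,e_k$ both rules out leakage and shows the diagonal block is the Shapley value independently of $k$; in effect it re-runs Shapley's uniqueness proof in $\mathbb R^m$. The unanimity-game alternative you sketch is exactly the paper's argument.
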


\begin{proof}[Proof sketch.]
    First observe that every $v\in \mathcal{G}_{[n]}^m$ can be uniquely decomposed into its scalar coordinate components $v^k$, $k\in [m]$. Since $\Phi$ is additive, it suffices to understand how the operator acts on games that are supported on a single coordinate.

    Fix $k\in [m]$ and consider a game $v$ whose values lie entirely in the $k$-th coordinate. By the dummy axiom, players whose marginal contribution vanishes receive the zero vector. By symmetry, players with identical marginal contributions must receive identical payoff vectors. Finally, vector efficiency requires that the sum of all payoff vectors equals the grand-coalition value $v([n])$, which lies entirely in the $k$-th coordinate.

    These three properties together force each $\Phi_i(v)$ to lie in the same coordinate direction as $v$. In particular, no component in the remaining $m-1$ coordinates can appear. Thus, the action of $\Phi$ on coordinate-supported games is necessarily coordinate-preserving.

    Since any $v\in \mathcal{G}_{[n]}^m$ can be written as a sum of coordinate-supported games and $\Phi$ is additive, the absence of cross-coordinate effects extends to arbitrary games. Consequently, there exists a scalar value operator such that each coordinate of $\Phi(v)$ depends only on the corresponding scalar component $v^{(k)}$. This establishes the component-wise decomposition. Complete details are provided in Appendix~\ref{appendix:proofs:main:results}.
\end{proof}


The decomposition result contained in Theorem \ref{thm:coordinatewise_decomposition} immediately yields a full characterization of the vector-valued Shapley operator.

\begin{theorem}\label{thm:existence:uniqueness:Shapley:formula}
There exists a unique vector- valued operator $\Phi: \MC G^m_{[n]}\to \R^{mn}$ satisfying efficiency, symmetry, dummy, and additivity as in Definition \ref{def:SHAP_operator_vector}. Moreover, for every game $v\in\MC G^m_{[n]}$ and every player $i\in[n]$, this operator is given by the permutation formula
\begin{align}\label{def:Phi:i:vector}
     \phi_i(v)=\frac{1}{n!} \sum_{\pi \in \mathfrak{S}_{[n]}} \Big[v\big(P_i(\pi) \cup \{i\}\big) - v\big(P_i(\pi)\big) \Big]\in \mathbb{R}^m,
\end{align}
or, equivalently, by
\begin{align}\label{def:Phi:i:vector_combinatory}
     \phi_i(v)=\sum_{S\subseteq ([n]\setminus \{i\})} \dfrac{|S|! (n-|S|-1)!}{n!} \Big[v(S\cup \{i\}) - v(S) \Big]\in \R^m.
\end{align}
\end{theorem}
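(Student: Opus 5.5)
The plan is to prove existence and uniqueness separately, with uniqueness coming from a direct coordinate-by-coordinate reduction to Theorem \ref{thm:SHAP_scalar}. For \emph{existence}, define $\Phi$ by the permutation formula \eqref{def:Phi:i:vector}. Then the $k$-th coordinate of $\phi_i(v)$ is the scalar Shapley value $\phi_i(\pi_k\circ v)$ given by \eqref{formula:Shapley:permu}, because $\pi_k$ is linear and commutes with the finite sums and differences in the formula. Each axiom, interpreted componentwise in $\R^m$, then reduces to the corresponding scalar axiom for every game $\pi_k\circ v\in\MC G_{[n]}$.

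For symmetry and dummy, one point needs checking. If $v(S\cup\{i\})=v(S\cup\{j\})$ holds as vectors for all $S$, then it holds for each coordinate game $\pi_k\circ v$. The same is true for the dummy hypothesis. So the scalar conclusions hold for every $k$, and hence the vector conclusions hold. Efficiency and additivity pass to coordinates in the same way. The equivalence of \eqref{def:Phi:i:vector} and \eqref{def:Phi:i:vector_combinatory} follows by applying Theorem \ref{thm:1dim:Shapley:values} to each coordinate game.

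For \emph{uniqueness}, let $\Psi$ be any vector-valued Shapley operator. I would avoid leaning on the sketched proof of Theorem \ref{thm:coordinatewise_decomposition} and argue directly. Fix $k\in[m]$ and $\ell\in[m]$, and let $e_\ell$ be the $\ell$-th unit vector. For a scalar game $w\in\MC G_{[n]}$, consider the embedded game $w e_\ell\in\MC G^m_{[n]}$. Define the scalar operator
\begin{align*}
\psi^{k,\ell}(w)\coloneqq \big(\pi_k(\Psi_1(we_\ell)),\dots,\pi_k(\Psi_n(we_\ell))\big).
\end{align*}
This operator is linear, symmetric and dummy, since the hypotheses on $w e_\ell$ are equivalent to those on $w$. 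Efficiency gives
\begin{align*}
\sum_i \psi^{k,\ell}_i(w)=\pi_k\big(w([n])e_\ell\big)=\delta_{k\ell}\,w([n]).
\end{align*}

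In the case $k=\ell$, the operator $\psi^{k,k}$ satisfies all four scalar axioms, so by Theorem \ref{thm:SHAP_scalar} it equals the Shapley value $\phi$. In the case $k\neq\ell$, $\psi^{k,\ell}$ satisfies linearity, symmetry, dummy, and a zero-efficiency condition $\sum_i\psi_i=0$. I will show that such an operator vanishes identically. Write $w$ in the unanimity basis, $w=\sum_{T\neq\varnothing}c_T u_T$, where $u_T(S)=1$ if $T\subseteq S$ and $0$ otherwise. Every player outside $T$ is a dummy in $u_T$, so it receives $0$. Any two players in $T$ are symmetric in $u_T$, so they receive equal payoffs. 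Zero-efficiency then forces $|T|\cdot\psi_i(u_T)=0$ for $i\in T$, so $\psi(u_T)=0$. By linearity, $\psi^{k,\ell}\equiv 0$.

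Finally, decompose $v=\sum_\ell(\pi_\ell\circ v)e_\ell$ and use additivity to get
\begin{align*}
\pi_k(\Psi_i(v))=\sum_\ell\psi^{k,\ell}_i(\pi_\ell\circ v)=\phi_i(\pi_k\circ v).
\end{align*}
This is exactly the formula from the existence step, so $\Psi=\Phi$.

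The main obstacle I expect is the cross-coordinate step $k\neq\ell$. Efficiency alone only makes the off-diagonal contributions sum to zero, so the argument must genuinely use symmetry and dummy on the unanimity basis; that is the standard Shapley uniqueness mechanism applied to a degenerate efficiency constraint. A minor point is that additivity as stated includes real scalars, so linearity, and hence the expansion in the basis $\{u_T\}$ of $\MC G_{[n]}$, is available.
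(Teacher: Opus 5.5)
Your proposal is correct and follows essentially the paper's argument: existence by checking the axioms for the permutation formula (you do this coordinatewise via Theorem~\ref{thm:SHAP_scalar}, which is equally valid), and uniqueness by ruling out cross-coordinate leakage on the unanimity games $u_T$ using dummy, symmetry and efficiency. That leakage argument is exactly Step~1 of the appendix proof of Theorem~\ref{thm:coordinatewise_decomposition}, and both proofs then finish with scalar uniqueness in each coordinate; your explicit treatment of the off-diagonal operators $\psi^{k,\ell}$, $k\neq\ell$, is if anything more careful than Lemma~\ref{lemma:coordinates:Shapley}, which simply asserts that $\Phi(v^{[j]})$ is supported in coordinate $j$ and relies implicitly on that Step~1.
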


\begin{proof}[Proof sketch.]
Existence is obtained by defining $\Phi$ component-wise: we apply the classical scalar Shapley operator to each coordinate game $\pi_k \circ v$ and stack the resulting allocations into a vector in $\mathbb{R}^m$. The Shapley axioms in Definition \ref{def:SHAP_operator} then follow coordinate-wise from the scalar axioms. Uniqueness follows from Theorem \ref{thm:coordinatewise_decomposition} together with the scalar uniqueness Theorems \ref{thm:SHAP_scalar} or \ref{thm:1dim:Shapley:values} applied to each coordinate. Complete details are provided once again in Appendix \ref{appendix:proofs:main:results}. 
\end{proof}

Theorem \ref{thm:existence:uniqueness:Shapley:formula} shows that extending Shapley values to vector-valued games preserves the axiomatic structure of the scalar theory. In practice, SHAP values for multi-output models are therefore obtained by applying the standard Shapley formula independently to each output coordinate.

\begin{corollary}[Impossibility of joint-output Shapley attribution]\label{cor:immpossible}
There exists no feature-attribution method for vector-valued predictors that simultaneously
\begin{itemize}
    \item[(i)] satisfies the Shapley axioms of efficiency, symmetry, dummy player, and additivity;
    \item[(ii)] assigns non-component-wise (joint) attributions across output coordinates.
\end{itemize}
In particular, any attribution rule that explicitly couples different outputs must violate at least one of the Shapley axioms.
\end{corollary}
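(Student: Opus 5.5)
The corollary follows from Theorem \ref{thm:existence:uniqueness:Shapley:formula}, with Theorem \ref{thm:coordinatewise_decomposition} as the structural input. The plan is to argue by contradiction. Suppose an attribution rule $\Phi:\MC G^m_{[n]}\to\R^{mn}$ satisfies (i), i.e. the four axioms of Definition \ref{def:SHAP_operator_vector}. Theorem \ref{thm:existence:uniqueness:Shapley:formula} then forces $\Phi$ to coincide with the operator given by \eqref{def:Phi:i:vector}. Equivalently, $\Phi_i(v)=\big(\phi_i(\pi_1\circ v),\dots,\phi_i(\pi_m\circ v)\big)$, where $\phi$ is the scalar Shapley operator of Theorem \ref{thm:SHAP_scalar}. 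Consequently, for every $k\in[m]$ the map $v\mapsto\pi_k\circ\Phi_i(v)$ factors through $\pi_k\circ v$. This is precisely the statement that the rule is component-wise, contradicting (ii).

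The first step is to make (ii) precise, since the corollary leaves ``joint'' informal. I will say that $\Phi$ \emph{couples outputs} if there exist $k\in[m]$, $i\in[n]$ and games $v,w\in\MC G^m_{[n]}$ with $\pi_k\circ v=\pi_k\circ w$ but $\pi_k\circ\Phi_i(v)\neq\pi_k\circ\Phi_i(w)$. By additivity this is equivalent to the existence of a game $u=v-w$ with $\pi_k\circ u\equiv 0$ and $\pi_k\circ\Phi_i(u)\neq 0$. In words, some output coordinate receives attribution from a game that is silent in that coordinate. Theorem \ref{thm:coordinatewise_decomposition} rules this out directly: it gives $\pi_k\circ\Phi_i(u)=\phi_i(\pi_k\circ u)=\phi_i(0)=0$, where $\phi_i(0)=0$ follows from additivity or the dummy axiom.

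Finally, the last sentence of the corollary is the contrapositive. Any rule that explicitly couples different outputs, in the above sense, cannot satisfy all four axioms simultaneously. I would add one illustrative remark to show the statement is not vacuous. A mixing rule $\Phi_i(v)=A\,\phi_i(v)$ with a fixed non-diagonal matrix $A$ keeps additivity, symmetry and dummy but fails efficiency unless $A=I$. This exhibits which axiom typically breaks.

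The main obstacle is not technical but definitional: the argument is only as strong as the formalization of ``non-component-wise'' attribution in (ii). I would therefore state the coupling definition explicitly before the contradiction, so that the corollary is a rigorous consequence rather than a rephrasing. With that definition fixed, every remaining step is a direct appeal to the two preceding theorems.
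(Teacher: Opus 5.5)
Your argument is correct and follows the paper's reasoning. The paper gives no separate proof of the corollary and presents it as an immediate consequence of the rigidity Theorem \ref{thm:coordinatewise_decomposition} and the uniqueness Theorem \ref{thm:existence:uniqueness:Shapley:formula}, which is exactly what you invoke; your explicit definition of ``coupling'' (some $\pi_k(\Phi_i(v))$ not determined by $\pi_k\circ v$) and the mixing example $A\,\phi_i(v)$, which breaks only efficiency, usefully make precise what the paper leaves informal.
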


While Theorems \ref{thm:coordinatewise_decomposition} and \eqref{thm:existence:uniqueness:Shapley:formula} establish a structural rigidity and uniqueness result for vector-valued Shapley operators, they do not address the quantitative behavior of the attribution rule. In practical learning scenarios, predictors are subject to perturbations arising from finite-sample effects, architectural choices, and optimization variability. It is therefore natural to ask whether the Shapley operator is stable under perturbations of the underlying cooperative game. 

\medskip 
\noindent Given $a=(a_1,\ldots,a_n)\in (\mathbb{R}^{m})^{n}$, denote
\begin{align*}
    \|a\|_{\mathcal{A},\infty}:= \max_{i\in [n]} \|a_i\|_{\infty}.
\end{align*}
Moreover, for any game $v\in \mathcal{G}_{[n]}^m$, define the norm 
\begin{align*}
    \|v\|_{\mathcal{G},\infty}:= \max_{S\subseteq [n]} \|v(S)\|_{\infty}
\end{align*}
and the marginal seminorm
\begin{align*}
    \|v\|_{\Delta,\infty}:=\max_{i\in [n]} \max_{S\subseteq [n]\setminus \{i\}} \|v(S\cup \{i\}) - v(S)\|_{\infty}.
\end{align*}

The following result establishes a Lipschitz-type continuity property of the vector-valued SHAP operator with respect to both the marginal seminorm and the sup norm on games.
\begin{proposition}
    \label{prop:estimates:Phi}
    Let $\Phi:\mathcal{G}_{[n]}^m \to (\mathbb{R}^{m})^n$ be the vector-valued SHAP operator given by Theorem \ref{thm:existence:uniqueness:Shapley:formula}. Then,
    \begin{itemize}
        \item[(i)] $\Phi$ is a linear operator.
        \item[(ii)] for all $u,v\in \mathcal{G}_{[n]}^m$, we have 
    \begin{align*}
        \|\Phi(u)- \Phi(v)\|_{\mathcal{A},\infty}\leq  \|u-v\|_{\Delta,\infty}. 
    \end{align*}
        \item[(iii)] For all $u,v\in \mathcal{G}_{[n]}^m$, we have 
    \begin{align*}
        \|\Phi(u)- \Phi(v)\|_{\mathcal{A},\infty}\leq 2 \|u-v\|_{\mathcal{G},\infty}. 
    \end{align*}
    \end{itemize}    
\end{proposition}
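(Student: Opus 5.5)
We work directly from the explicit formula \eqref{def:Phi:i:vector_combinatory} of Theorem \ref{thm:existence:uniqueness:Shapley:formula}. For (i), each $\phi_i(v)$ is a finite linear combination, with coefficients $w_S:=\frac{|S|!(n-|S|-1)!}{n!}$ independent of $v$, of the vectors $v(S\cup\{i\})-v(S)$. These vectors depend linearly on $v$, so $\Phi(\alpha u+\beta v)=\alpha\Phi(u)+\beta\Phi(v)$. The same conclusion follows from the additivity axiom in Definition \ref{def:SHAP_operator_vector}, which $\Phi$ satisfies by construction.

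For (ii), linearity gives $\Phi(u)-\Phi(v)=\Phi(u-v)$, so it suffices to prove $\|\Phi(w)\|_{\mathcal A,\infty}\le\|w\|_{\Delta,\infty}$ for every $w\in\mathcal G^m_{[n]}$. The key step is that the weights are nonnegative and sum to one:
\begin{align*}
\sum_{S\subseteq[n]\setminus\{i\}} w_S=\sum_{s=0}^{n-1}\binom{n-1}{s}\frac{s!(n-1-s)!}{n!}=\sum_{s=0}^{n-1}\frac{1}{n}=1.
\end{align*}
An alternative justification uses the permutation formula \eqref{def:Phi:i:vector}, which exhibits $\phi_i(w)$ as an average over $n!$ permutations. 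In either form, $\phi_i(w)$ is a convex combination of marginal contributions. The triangle inequality for $\|\cdot\|_\infty$ on $\mathbb R^m$ then gives $\|\phi_i(w)\|_\infty\le\max_{S}\|w(S\cup\{i\})-w(S)\|_\infty$. Taking the maximum over $i$ yields (ii).

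For (iii), it remains to bound the marginal seminorm by the sup norm. For every $i$ and every $S\subseteq[n]\setminus\{i\}$,
\begin{align*}
\|w(S\cup\{i\})-w(S)\|_\infty\le\|w(S\cup\{i\})\|_\infty+\|w(S)\|_\infty\le 2\|w\|_{\mathcal G,\infty}.
\end{align*}
Hence $\|w\|_{\Delta,\infty}\le2\|w\|_{\mathcal G,\infty}$, and combining this with (ii) applied to $w=u-v$ gives (iii). The admissibility of the game $u-v\in\mathcal G^m_{[n]}$ uses only $(u-v)(\varnothing)=0$.

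None of the steps is a real obstacle. The only point requiring a small computation is the normalization of the Shapley weights in (ii), and it can be bypassed entirely by using the permutation representation, where the convex-combination structure is evident.
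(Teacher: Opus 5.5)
Your proof is correct and follows essentially the same route as the paper: linearity from the explicit formula, reduction to $g=u-v$, the triangle inequality using that the Shapley weights sum to one, and the bound $\|g(S\cup\{i\})-g(S)\|_\infty\le 2\|g\|_{\mathcal G,\infty}$ for (iii). Your explicit check that the weights are nonnegative and sum to one fills in a step the paper uses without proof.
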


\begin{proof}
    Clearly, (i) follows from \eqref{def:Phi:i:vector}. Now, let $g:=u-v\in \mathcal{G}_{[n]}^m$. Fix $i\in [n]$. Then, by triangle inequality we have
    \begin{align}
        \label{ineq:SHAP}
        \|\phi_i(g)\|_{\infty}=\left\| \sum_{S\subseteq [n]\setminus \{i\}} w(S) (g(S\cup \{i\}) - g(S)) \right\|_{\infty} \leq \max_{S\subseteq \{[n] \setminus \{i\} \}} \|g(S\cup \{i\}) - g(S)\|_{\infty},
    \end{align}
    where we have used the fact that $\sum_{S\subseteq [n]\setminus \{i\}} w(S)=1$. Then, taking the maximum on $i\in [n]$ in \eqref{ineq:SHAP}, we deduce that
    \begin{align*}
        \|\Phi(g)\|_{\mathcal{A},\infty}\leq \|g\|_{\Delta,\infty},
    \end{align*}
    which proves (ii). Finally, (iii) follows from \eqref{ineq:SHAP}, and the fact that 
    \begin{align*}
        \|g(S\cup \{i\}) - g(S)\|_{\infty}\leq \|g(S\cup \{i\})\|_\infty +\|g(S)\|_\infty \leq 2 \|g\|_{\mathcal{G},\infty}.
    \end{align*}
    and taking the maximum on $i\in [n]$ in the resulting inequality.
\end{proof}

\subsection{On axiomatic choices in the vector-valued setting}\label{subsec:SHAP_discussion}

The extension of the Shapley axioms to vector-valued payoffs deserves explicit justification. In particular, efficiency and additivity are imposed directly in the output space $\mathbb{R}^m$, rather than after scalarization. This choice reflects the interpretation of feature attribution as a decomposition of the realized prediction vector itself, rather than of a derived scalar utility. Under this interpretation, vector-valued efficiency enforces exact reconstruction of the prediction deviation, while additivity ensures linear consistency across independent predictive contributions. Relaxing either axiom enlarges the space of admissible attribution rules, but at the cost of abandoning the classical Shapley characterizations. 

Cooperative games with vector-valued or multicriteria payoffs have been studied in the game-theoretic literature under a variety of frameworks, motivated by the absence of a canonical total order in $\mathbb R^m$. In these settings, solution concepts typically rely on additional structure, such as preference relations, ordering cones, or scalarization procedures, in order to compare vector payoffs and define notions of fairness or optimality; see, for instance, \cite{patrone2007multicriteria} for a multicriteria cooperative-game perspective. In contrast, the approach adopted here deliberately avoids introducing any preference or scalarization structure on the output space. Instead, we extend the classical Shapley axioms verbatim to vector-valued payoffs, interpreting efficiency and additivity directly in $\mathbb R^m$. 

The rigidity result established in Theorem \ref{thm:coordinatewise_decomposition} depends crucially on this axiomatic framework. This choice is not ad hoc. The axioms used in Definition \ref{def:SHAP_operator_vector} represent the most direct extension of the original Shapley principles to the multi-output setting, preserving their normative interpretation. Efficiency requires that the total vector-valued payoff of the grand coalition be fully distributed among the players, without loss or creation of value. Symmetry enforces identical treatment of players with identical marginal contributions. The dummy player axiom excludes spurious attributions to features with no effect, and additivity expresses linear consistency with respect to the superposition of independent games. Together, these principles form the axiomatic backbone underlying SHAP-based interpretability in the scalar-output case.

Within this framework, additivity plays a particularly structural role. By enforcing linearity of the attribution operator at the vector level, it enables the decomposition arguments used in the proof of Theorem \ref{thm:coordinatewise_decomposition} and ultimately forces output-wise separation. The resulting no-go theorem should therefore be understood as a conditional statement: output-coupled attributions are incompatible with the Shapley axioms when these are imposed directly on vector-valued payoffs.

At the same time, other axiomatic choices are conceivable in multi-output or multi-objective settings. For instance, one may replace vector-valued efficiency with scalarized or weighted notions of efficiency, relax additivity, or introduce preference structures over outputs. Such approaches may allow attribution rules that explicitly mix output coordinates. However, they necessarily depart from the classical Shapley framework and do not retain its axiomatic characterization in the scalar case.

From this perspective, Theorem \ref{thm:coordinatewise_decomposition} precisely delineates the scope of Shapley-consistent interpretability for multi-output predictors. It does not claim that joint-output explanations are impossible in general, but rather that they cannot coexist with the classical Shapley axioms when these are extended verbatim to vector-valued games. This clarification makes explicit which assumptions are responsible for the rigidity result and identifies the axioms that would need to be relaxed in order to move beyond component-wise attribution.

\subsection{SHAP values for ML models}\label{subsec:SHAP_ML}

The abstract framework developed in Sections \ref{subsec:SHAP_scalar} and \ref{subsec:SHAP_vector} characterizes fair allocation rules for cooperative games independently of any specific application. We now explain how this formalism connects to SHAP explanations in ML and how multi-output predictors naturally fit into the vector-valued setting.

Consider a ML model $f:\R^n \to \R^m$ and let $X = (X_1, \ldots, X_n)\in\R^n$ be a random input vector distributed according to a background probability measure $\mu$ on $\R^n$. For a given observation $x = (x_1, \ldots, x_n)\in\R^n$, we associate to $f$ the \textit{centered characteristic function}
\begin{align}\label{def:v:f:R:m}
    v_f(S) = \mathbb{E}_{X\sim\mu} \Big[f(X)\big|X_S=x_S\Big]- \mathbb{E}_{X\sim \mu}[f(X)] \in\R^m, \qquad S\subseteq[n].    
\end{align}

Here, $X_S\in\R^{|S|}$ denotes the subvector of features indexed by $S$ and where the conditional expectation is understood in the sense of regular conditional probabilities. Moreover, we notice that, by construction, $v_f(\varnothing)=0$. Hence, $v_f\in \mathcal{G}_{[n]}^m$.

The quantity $v_f(S)$ represents the expected output of the model when the features in $S$ are fixed at their observed values $x_S$, while all remaining features vary according to their background distribution. In other words, it expresses what the model ``expects to predict'' once the information in the subset $S$ is known. The incremental difference $v_f(S \cup\{i\}) - v_f(S)$ then quantifies how much additional predictive power the feature $i\in[n]$ adds beyond the information already provided by $S$. 

In this context, the SHAP value of the $i$-th feature is defined as the average of these marginal contributions to the model output, computed over all possible orderings in which features could enter the coalition: 
\begin{align}\label{eq:phi_ML}
    \phi_i(f;x) = \frac{1}{n!}\sum_{\pi\in\mathfrak{S}_{[n]}} \Big[v_f\big(P_i(\pi)\cup\{i\}\big)-v_f\big(P_i(\pi)\big)\Big] \in\R^m.    
\end{align}
Equivalently, this can be expressed in the \textit{subset form}
\begin{align}\label{formula:Shapley:values:mdim}
    \phi_i(f;x)=\sum_{S\subseteq ([n]\setminus \{i\})} \dfrac{|S|! (n-|S|-1)!}{n!} \Big(v_f(S\cup \{i\}) - v_f(S) \Big)\in \R^m, 
\end{align}
where each subset $S$ contributes with a weight representing the proportion of possible orderings in which the subset $S$ appears before feature $i$. 

This construction makes the correspondence between cooperative games and model interpretability fully explicit: the function $v_f$ plays the role of the characteristic function, the features are the players, and the SHAP values $\phi_i(f;x)$ quantify each feature's expected marginal contribution to the prediction $f(x)$ relative to the baseline $\mathbb{E}_{X\sim\mu}[f(X)]$. The following result, whose complete proof will be given once again in Appendix \ref{appendix:proofs:main:results}, formalizes this correspondence, showing that the sum of all feature attributions exactly reconstructs the model's output deviation from its expected value.

\begin{proposition}\label{corollary:equality:efficiency}
For all $f:\R^n \to \R^m$ and all input data $x\in \R^n$, it holds
\begin{align}\label{equality:efficiency}
    f(x)-\mathbb{E}_{X\sim\mu}[f(X)]= \sum_{i=1}^n \phi_i(f;x),
\end{align}
with $\phi_i(f;x)$ defined in \eqref{eq:phi_ML}.
\end{proposition}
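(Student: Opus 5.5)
The identity in Proposition \ref{corollary:equality:efficiency} is the efficiency axiom specialised to the game induced by the predictor, so I will reduce it to Theorem \ref{thm:existence:uniqueness:Shapley:formula}. Beyond that reduction, the only work is to identify the grand-coalition value $v_f([n])$.

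\textbf{Step 1.} I check that the centered characteristic function $v_f$ in \eqref{def:v:f:R:m} lies in $\MC G^m_{[n]}$. For $S=\varnothing$, conditioning on the empty vector $X_\varnothing$ is conditioning on the trivial $\sigma$-algebra. Hence $\mathbb{E}_{X\sim\mu}[f(X)\mid X_\varnothing=x_\varnothing]=\mathbb{E}_{X\sim\mu}[f(X)]$, and so $v_f(\varnothing)=0$.

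\textbf{Step 2.} I observe that the SHAP vector $\phi_i(f;x)$ in \eqref{eq:phi_ML} is exactly the vector-valued Shapley operator \eqref{def:Phi:i:vector} evaluated at $v_f$. The efficiency axiom holds componentwise in $\R^m$ for this operator, as guaranteed by Theorem \ref{thm:existence:uniqueness:Shapley:formula}. It therefore gives $\sum_{i=1}^n \phi_i(f;x)=v_f([n])$.

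For a self-contained check, I can also verify efficiency directly from the permutation formula. For a fixed $\pi\in\mathfrak{S}_{[n]}$, the marginal contributions $v(P_i(\pi)\cup\{i\})-v(P_i(\pi))$, taken in the order of $\pi$, telescope to $v([n])-v(\varnothing)=v([n])$. Averaging over the $n!$ permutations then gives the claim.

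\textbf{Step 3.} I compute $v_f([n])=\mathbb{E}[f(X)\mid X=x]-\mathbb{E}[f(X)]$ and argue that $\mathbb{E}[f(X)\mid X=x]=f(x)$. This is where I expect the only genuine subtlety. Regular conditional probabilities are defined only up to $\mu$-null sets, and the evaluation at a point $x$ need not even be meaningful if $x$ lies outside the support of $\mu$.

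I would handle this by fixing the version of the conditional expectation given $X_{[n]}=x$ as the Dirac mass at $x$. This is a legitimate choice of regular conditional distribution, since $X$ is $\sigma(X)$-measurable. It is also the convention implicit in SHAP, under which fixing all features returns the model output. Stating this convention explicitly, together with integrability of $f(X)$, gives $v_f([n])=f(x)-\mathbb{E}_{X\sim\mu}[f(X)]$.

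Combining Steps 2 and 3 yields \eqref{equality:efficiency}.
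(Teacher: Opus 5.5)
Your proposal is correct and follows the paper's proof: identify $v_f(\varnothing)=0$ and $v_f([n])=f(x)-\mathbb{E}_{X\sim\mu}[f(X)]$, then invoke the efficiency of the operator from Theorem \ref{thm:existence:uniqueness:Shapley:formula}, whose existence part is the same telescoping argument you reproduce. Your explicit choice of the Dirac version of the conditional law at $S=[n]$, together with the integrability of $f(X)$, makes precise a point the paper simply asserts as immediate.
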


Moreover, the stability result established in Proposition \ref{prop:estimates:Phi} can be naturally translated into the ML setting through the following proposition, whose proof is immediate and left to the reader. 
\begin{proposition}
    \label{prop:estimates:Phi:02}
    Let $f,g:\mathbb{R}^n \to \mathbb{R}^m$ be measurable predictors and let $v_f,v_g\in \mathcal{G}_{[n]}^m$ be defined by \eqref{def:v:f:R:m} with respect to the same $\mu$ and $x$. For $h:\mathbb{R}^n \to \mathbb{R}^m$, we define 
    \begin{align*}
        \Phi(h;x)=(\phi_1(h;x),\ldots, \phi_n(h;x)),
    \end{align*}
    where $\phi_i$ is given by \eqref{formula:Shapley:values:mdim}, for each $i\in [n]$. Then, $\Phi$ is a linear operator and 
    \begin{align}
        \label{SHAP:stability:bound}
        \|\Phi(f;x) - \Phi(g;x)\|_{\mathcal{A},\infty}\leq \|v_f-v_g\|_{\Delta,\infty}.
    \end{align}
    Moreover, if $\|f-g\|_{L^\infty(\mu)}<\infty$, then
    \begin{align*}
        \|\Phi(f;x) - \Phi(g;x)\|_{\mathcal{A},\infty}\leq 2 \|f-g\|_{L^\infty(\mu)}.
    \end{align*}
\end{proposition}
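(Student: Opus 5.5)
The plan is to reduce everything to Proposition \ref{prop:estimates:Phi} by viewing $h\mapsto v_h$ as a map from predictors into $\mathcal{G}_{[n]}^m$ and composing it with the vector-valued Shapley operator of Theorem \ref{thm:existence:uniqueness:Shapley:formula}. By definition, $\Phi(h;x)=\Phi(v_h)$, where on the right $\Phi$ is the game operator. The proof then has three parts: linearity, the marginal bound, and the $L^\infty$ bound.

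\textbf{Linearity.} Linearity of conditional expectation gives, for measurable $f,g$ (say integrable under $\mu$, so that \eqref{def:v:f:R:m} makes sense) and scalars $\alpha,\beta$,
\[
v_{\alpha f+\beta g}(S)=\alpha v_f(S)+\beta v_g(S)
\]
for every $S\subseteq[n]$. The centering term is linear as well. Hence $h\mapsto v_h$ is linear into $\mathcal{G}_{[n]}^m$. Composing it with the linear operator $\Phi$ of Proposition \ref{prop:estimates:Phi}(i) shows that $h\mapsto \Phi(h;x)$ is linear.

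\textbf{Marginal bound.} Since $v_f,v_g\in\mathcal{G}_{[n]}^m$, Proposition \ref{prop:estimates:Phi}(ii) applied with $u=v_f$, $v=v_g$ gives \eqref{SHAP:stability:bound} directly.

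\textbf{$L^\infty$ bound.} By Proposition \ref{prop:estimates:Phi}(iii), it suffices to show $\|v_f-v_g\|_{\mathcal{G},\infty}\le \|f-g\|_{L^\infty(\mu)}$. This is the only step requiring care, and I expect it to be the main obstacle. Set $h=f-g$, so that $v_f-v_g=v_h$. Write
\[
v_h(S)=\mathbb{E}[h(X)\mid X_S=x_S]-\mathbb{E}[h(X)].
\]
Each term is bounded componentwise in absolute value by $\|h\|_{L^\infty(\mu)}$, by monotonicity of (regular) conditional expectation. The naive triangle inequality therefore only yields
\[
\|v_h(S)\|_\infty\le 2\|h\|_{L^\infty(\mu)},
\]
which would give a constant $4$ instead of $2$. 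To recover the constant $2$, I will bypass Proposition \ref{prop:estimates:Phi}(iii) and bound the marginal seminorm directly. The key observation is that
\[
v_h(S\cup\{i\})-v_h(S)=\mathbb{E}[h\mid X_{S\cup\{i\}}=x_{S\cup\{i\}}]-\mathbb{E}[h\mid X_S=x_S],
\]
because the centering term cancels. This is a difference of two conditional expectations, each bounded by $\|h\|_{L^\infty(\mu)}$, so
\[
\|v_h\|_{\Delta,\infty}\le 2\|h\|_{L^\infty(\mu)}.
\]
Combined with the marginal bound already established, this gives the claim.

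The one technical caveat concerns evaluating regular conditional expectations at the specific point $x_S$. The essential-sup bound holds only for $\mu$-almost every conditioning value. I would therefore either choose versions of the conditional expectations bounded by the essential supremum everywhere, which is always possible by truncation, or state this as the implicit convention behind \eqref{def:v:f:R:m}.
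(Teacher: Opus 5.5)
Your proof is correct, and it is sharper than the ``immediate'' proof the paper leaves to the reader. The evident intended route is to feed $v_f,v_g$ into Proposition~\ref{prop:estimates:Phi}, and you do exactly that for linearity and for \eqref{SHAP:stability:bound}.

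For the $L^\infty$ estimate, however, going through part (iii) would need $\|v_f-v_g\|_{\mathcal{G},\infty}\le\|f-g\|_{L^\infty(\mu)}$, and this fails because of the centering term. Take $n=m=1$, $\mathbb{P}(X=1)=p=1-\mathbb{P}(X=0)$, $h(1)=1$, $h(0)=-1$ and $x=1$. Then $v_h(\{1\})=1-(2p-1)=2-2p$. So only $\|v_h\|_{\mathcal{G},\infty}\le 2\|h\|_{L^\infty(\mu)}$ is available, and (iii) yields the constant $4$.

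Your observation that $\mathbb{E}[h(X)]$ cancels in every marginal difference gives $\|v_h\|_{\Delta,\infty}\le 2\|h\|_{L^\infty(\mu)}$. That is precisely what delivers the stated constant $2$. The same example, with $\Phi(h;x)=v_h(\{1\})=2-2p$ and $p\to0$, shows that $2$ is sharp.

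Your caveat about evaluating regular conditional expectations at the specific point $x_S$ is a genuine subtlety that the definition \eqref{def:v:f:R:m} glosses over. Fix the version at the level of the conditional kernel rather than by truncating each conditional expectation separately. Concretely, modify the kernel on the null set of conditioning values where it charges $\{y:\|h(y)\|_\infty>\|h\|_{L^\infty(\mu)}\}$. That way $v_f-v_g=v_{f-g}$ still holds pointwise and the bound holds at every $x_S$.
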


\section{Numerical experiments}
\label{section:Numerical:experiments}

In addition to illustrating the structural consequences of Theorem \ref{thm:coordinatewise_decomposition}, the experiments are also informed by the quantitative stability estimate established in Proposition \ref{prop:estimates:Phi:02}. In particular, this result implies that if two predictors $f$ and $g$ are such that their characteristic functions $v_f$ and $v_g$ are close in the marginal seminorm, then their SHAP explanations must also be close in the allocation norm. Consequently, strong agreement between multi-output and single-output SHAP values is expected whenever the corresponding predictors remain uniformly close on the data distribution.

We notice that the theoretical analysis in Section \ref{sec:SHAP_math} is formulated entirely in terms of the exact Shapley operator defined by formulas \eqref{eq:phi_ML} and \eqref{formula:Shapley:values:mdim}, independently of any specific computational approximation. In practice, exact computation of SHAP values is intractable for high-dimensional nonlinear models, and one must use approximation schemes. In this work, we employ DeepExplainer \cite{lundberg2017unified}, which provides a tractable and widely-used approximation of Shapley values for neural networks. Importantly, the axiomatic rigidity and stability results established in Section \ref{sec:SHAP_math} apply to the exact Shapley operator and are therefore independent of the particular explainer used in the experiments. DeepExplainer is used solely as a computational surrogate to approximate these theoretical quantities.

\subsubsection*{Reproducibility details} The simulations were conducted on a laptop running Windows 11 (64-bit, build 26100). The system is equipped with an Intel Core Ultra 5.225H (14 cores, base 1.7 GHz, x86 64/AMD64) and 32 Gb of RAM. The algorithms were implemented in Python 3.13.5 (Anaconda). The code and the dataset used to reproduce the example are available on GitHub \cite{Github}.

\subsection{Experimental setup and dataset description}

The numerical experiments are conducted on a biomedical tabular dataset comprising metabolic, biochemical, and clinical variables measured at the individual level. The dataset consists of 15,931 samples with complete observations and a total of 75 numerical input variables, mainly corresponding to blood-based metabolites, lipid markers, inflammatory indicators, and standard clinical measurements.


The predictive task is formulated as a multi-output learning problem with three heterogeneous outputs: chronological age, MetSCORE, and biological sex. MetSCORE \cite{gil2024metscore} is a continuous metabolic risk index derived from metabolomic profiles, designed to capture systemic metabolic alterations through a weighted combination of blood-based biomarkers. Sex is encoded as a binary variable and treated as a classification target, while age and MetSCORE are addressed as regression outputs.

Prior to model training, a systematic exploratory analysis of the input space is carried out. Pairwise Pearson correlations among input variables reveal the presence of several highly correlated feature groups. To mitigate redundancy and multicollinearity effects, an automatic correlation-based pruning procedure is applied, removing input variables with absolute pairwise correlation exceeding a fixed threshold while favoring features with higher relevance of the outputs. This results in a reduced yet informative subset of input variables used throughout the learning and interpretability analyses.

In addition, principal component analysis (PCA) is performed on the standardized input data to assess the intrinsic dimensional structure of the dataset. The cumulative explained variance curve (depicted in Figure \ref{f:PCA}) exhibits a smooth concave curve, indicating a moderately high intrinsic dimensionality.

\begin{figure}[!h]
    \centering
    \includegraphics[width=0.5\linewidth]{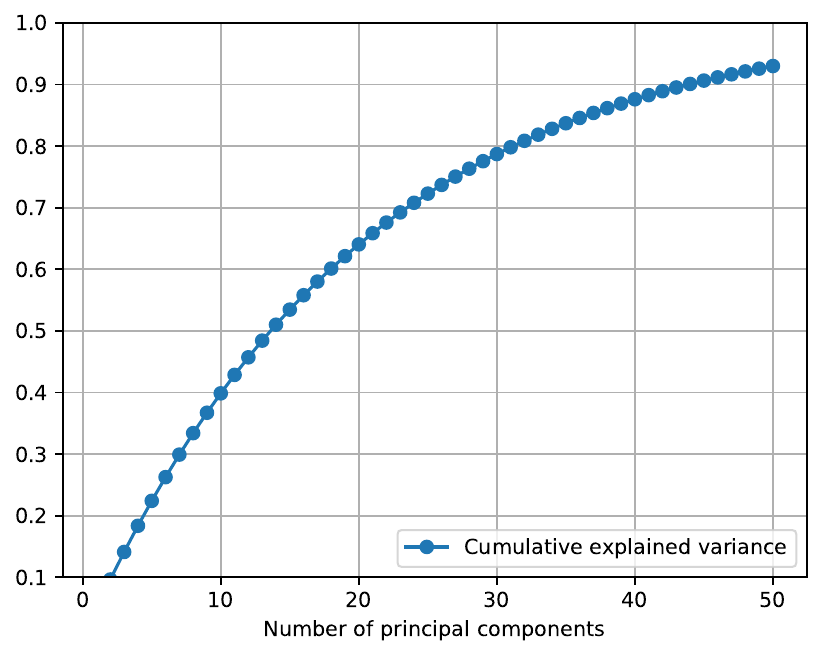}
    \caption{Cumulative explained variance obtained from PCA on the standardized input features. The shape of the curve indicates that the dataset does not admit a strong low-dimensional linear representation.}
    \label{f:PCA}
\end{figure}



In particular, approximately $60\%$ of the variance is captured by the first 20 principal components, while about $90\%$ requires on the order of 45-50 components. This behavior suggests that, although correlations are present, the dataset retains a rich and distributed informational structure, which motivates the use of flexible nonlinear models and supports the relevance of feature-level interpretability analyses.

\subsection{Model architectures and training}

We consider two multi-output neural network architectures, denoted by M1 and M2, which share the same output structure but differ in the expressiveness of their internal representation.

Model M1 is a linear multi-output baseline composed of two heads: one jointly predicting the continuous output (age and MetSCORE) and one predicting sex via a linear classification logit. All outputs are linear functions of the input features.

Model M2 extends this baseline by introducing a shared nonlinear representation implemented as a multilayer perceptron (MLP) with ReLU activations. The same two-head output structure is retained, with linear heads operating on the shared latent representation.

For comparison purposes, we also consider single-output models associated with each target variable (i.e. age, MetSCORE, and sex). These models share the same architectural design as the multi-output model M2, but are trained independently for each output.

\subsubsection{Training protocol}
All input variables were standarized using the empirical mean and standard deviation computed on the training set. The dataset was split $70\%$ training, $15\%$ validation, and $15\%$ test sets using a fixed random seed.

Models were trained using the Adam optimizer with learning rate $10^{-3}$, batch size $128$, and a maximum of 200 epochs. Early stopping with patience 20 was applied based on validation loss, and the best-performing model was retained.

For multi-output models, the loss function was the sum of task-specific losses:
\begin{align*}
    \mathcal{L}:=\mathcal{L}^{\text{age}} + \mathcal{L}^{\text{MetSCORE}} + \mathcal{L}^{\text{Sex}},
\end{align*}
with equal weighting across tasks. Age and MetSCORE were trained using mean squared error (MSE) loss, while Sex was trained using binary cross-entropy applied to the logit output. On the other hand, single-output models were trained independently using the corresponding loss. SHAP values were approximated using DeepExplainer with 100 background samples drawn from the training set.

\subsection{Training and evaluation of the models}

We first analyze the predictive performance of the multi-output models M1 and M2, using the linear architecture M1 as a baseline and comparing it with its nonlinear counterpart M2. Quantitative results for the continuous outputs (Age and MetSCORE) are reported in the Table \ref{table:age:metscore:all:models}, while classification metrics for sex prediction are summarized separately in the Table \ref{table:sex:all:models}.  

\begin{table}[!h]
    \centering
    \begin{tabular}{|c||c||c|c|c|} \hline
         Model & Output & $R^2$ & RMSE & MAE \\ \hline \hline 
         \multirow{2}{*}{M1 Multi-output} & Age & 0.5614 & 8.9306 & 6.9946 \\ \cline{2-5}
         & MetSCORE & 0.728266 & 0.1343 & 0.0963 \\ \cline{2-5}
         \hline\hline
         \multirow{2}{*}{M2 Multi-output} & Age & 0.6206 & 8.3061 & 6.3454 \\ \cline{2-5}
         & MetSCORE & 0.8123 & 0.1116 & 0.0764 \\ \cline{2-5}
         \hline\hline
         \multirow{2}{*}{M2 Single-output} & Age & 0.6529 & 7.9438 & 6.0959 \\ \cline{2-5}
         & MetSCORE & 0.8173 & 0.1101 & 0.0745 \\ \cline{2-5}
         \hline 
    \end{tabular}
    \vspace{6pt}
    \caption{Test-set regression performance for the continuous outputs (Age and MetSCORE). Results are reported for the linear multi-output baseline (M1), the nonlinear multi-output model (M2), and the corresponding single-output models based on the same architecture.} \label{table:age:metscore:all:models}
\end{table}

For the Age prediction task, the nonlinear multi-output model M2 clearly outperforms the linear baseline M1. The coefficient of determination increases from $R^2=0.5614$ to $R^2=0.6206$, together with a consistent reduction in both RMSE (from 8.93 to 8.31) and MAE (from 6.99 to 6.35). The improvement highlights the relevance of nonlinear feature interactions when modeling age-related variability from metabolic and clinical inputs.

The multi-output model's lower accuracy in predicting chronological age suggests that the shared feature space prioritizes metabolic and sex-specific signals over mere temporal progression. For a metabolic clock, the primary objective is not to maximize chronological precision, but to accurately reflect the individual's metabolic age. By accounting for MetSCORE and sex within the same architecture, the model effectively filters out chronological noise, yielding an estimation that represents biological status rather than calendar years. This trade-off may be desirable, as it aligns the prediction more closely with physiological markers than purely chronological progression.

A similar, and even more pronounced, improvement is observed for MetSCORE prediction. The multi-output M2 model achieves a substantially higher $R^2$ value (0.8123) compared to M1 (0.7283), accompanied by lower RMSE and MAE values. This suggests that MetSCORE benefits particularly from the increased expressiveness of the shared nonlinear representation.

\begin{table}[!h]
    \centering
    \begin{tabular}{|c||c|c|c|} \hline 
         Model & ACC & AUC & F1-score \\ \hline \hline  
         M1 Multi-output & 0.9373 & 0.9814 & 0.9462 \\ \hline \hline
         M2 Multi-output & 0.9485 & 0.9869 & 0.9559 \\ \hline \hline 
         M2 Single-output & 0.9498 & 0.9849 & 0.9568 \\ \hline 
    \end{tabular}
    \vspace{6pt}
    \caption{Test-set classification performance for sex prediction. Results are shown for the multi-output models M1 and M2, as well as for the single-output model derived from the M2 architecture.}
    \label{table:sex:all:models}
\end{table}

Regarding sex classification in the Table \ref{table:sex:all:models}, both multi-output models achieve strong performance, reflecting the presence of highly informative features for this task. Nevertheless, M2 consistently improves upon M1, with accuracy increasing from 0.9373 to 0.9485 and AUC from 0.9814 to 0.9869. These results indicate that the nonlinear shared representation does not introduce adverse interference between tasks and remains effective for both regression and classification objectives.

We now compare the multi-output M2 model with its single-output counterparts. For Age, the single-output model achieves a higher predictive accuracy than the multi-output version, with $R^2=0.6529$ versus $R^2=0.6206$, and further reductions in RMSE, and MAE. This behavior is expected, as the single-output model can fully specialize its representation to the age prediction task.

For MetSCORE, the difference between single-output and multi-output M2 models is markedly smaller. While the single-output model slightly improves all regression metrics, the gains are marginal, indicating that MetSCORE prediction is largely compatible with joint learning alongside the other outputs.

Finally, for sex classification, the performance of the multi-output and single-output M2 models is nearly indistinguishable. Accuracy, AUC, and F1-score differ only marginally, suggesting that sex prediction is robust to parameter sharing and does not require task-specific specialization.

We analyze the computational cost of the proposed models in terms of training time and RAM. The Table \ref{table:time:and:RAM} reports the total training time and the peak RAM usage for the M2 architecture, considering both the multi-output formulation and the single-output counterparts.

\begin{table}[!h]
    \centering
    \begin{tabular}{|c||c|c|} \hline 
         & Time (s) & RAM (MB) \\ \hline 
         Age & 81.71 & 605.96 \\ \hline
         MetSCORE & 84.60 & 603.93 \\ \hline 
         Sex & 83.13 & 603.11 \\ \hline 
         Total M2 single-output & 249.44  & 605.96 \\ \hline 
         Total M2 multi-output & 89.3678 & 599.15 \\ \hline 
    \end{tabular}
    \vspace{6pt}
    \caption{Training time and peak RAM usage for the M2 architecture in the multi-output and single-output settings. Single-output models are trained independently for each target (Age, MetSCORE, and Sex), and their total training time is reported as the sum over tasks, while the RAM value corresponds to the maximum peak memory observed among the single-output runs.}

    \label{table:time:and:RAM}
\end{table}

For the single-output setting, three independent models are trained, one for each target variable (Age, MetSCORE, and Sex). Each of these models requires a comparable amount of memory, with RAM usage slightly above 600 MB. The total training time for the single-output approach is therefore cumulative, amounting to approximately 249 seconds. In contrast, the reported RAM usage for the total M2 single-output corresponds to the maximum peak memory observed among the three single-output models, since they are trained sequentially rather than simultaneously.

The multi-output M2 model exhibits a markedly different behavior. Despite predicting all three outputs jointly, its total training time is approximately 89 seconds, which is significantly lower than the cumulative time required by the single-output models. In terms of RAM, the multi-output model uses slightly less RAM (about 599 MB) than any of the single-output models, indicating that parameter sharing does not introduce additional memory overhead. 

\subsection{SHAP values}

The local attributions of SHAP values provided by beeswarms for both multi-output and single-output models are depicted in the Figure \ref{fig:beeswarm:age:metscore:sex}:

\begin{figure}[!h]
    \centering
    \includegraphics[width=1.02\linewidth]{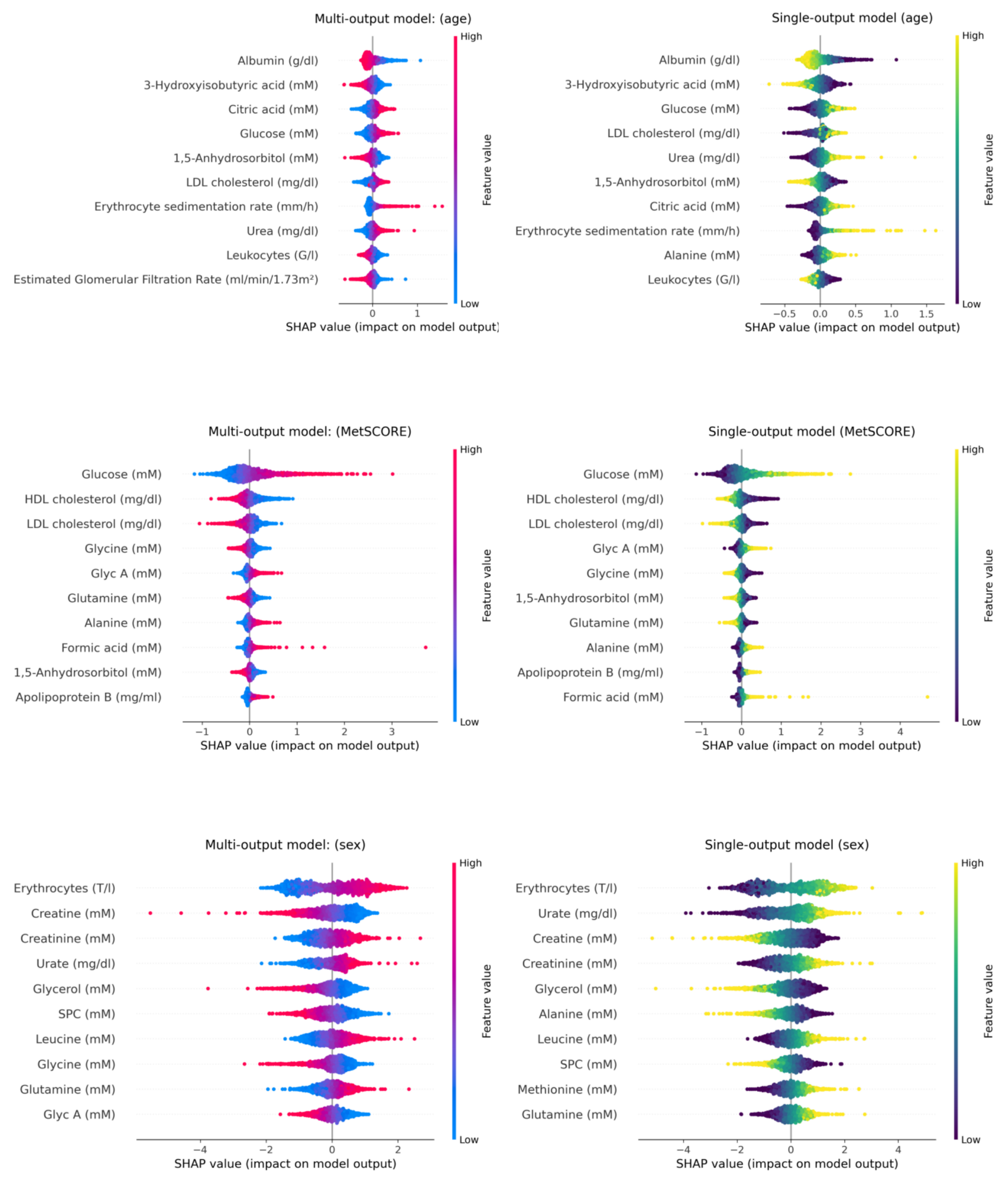}
    \caption{SHAP beeswarm plots comparing multi-output and single-output models for the three prediction tasks considered. From top to bottom: age, MetSCORE, and sex. In each row, the left panel corresponds to the multi-output model, while the right panel shows the associated single-output model trained independently for the same target.}
    \label{fig:beeswarm:age:metscore:sex}
\end{figure}

It should be noted that the apparent discrepancy between visual dispersion and feature ranking arises because SHAP summary plots are ordered according to the mean absolute contribution across the dataset, rather than by the variance or the maximum magnitude of the SHAP values. As a consequence, features with rare but extreme effects may appear visually more dispersed, while having a smaller average impact and therefore a lower rank.

For age prediction, the most influential variables (such as albumin, glucose-related metabolites, lipid markers, and renal function indicators) exhibit consistent monotonic trends across both models. High or low feature values induce SHAP contributions with the same sign in the multi-output and single-output settings, suggesting that the shared representation does not distort the underlying physiological associations. Minor differences are mainly observed in the dispersion of SHAP values, with the multi-output model typically yielding slightly more concentrated distributions, which can be interpreted as a regularization effect induced by parameter sharing across tasks.

In the case of MetSCORE, the agreement between the two approaches remains strong for the top-ranked features, particularly glucose, cholesterol-related variables, and selected amino acids. The multi-output model reproduces the same ordering of feature importance and preserves the directionality of contributions. However, for some metabolites, the single-output model displays heavier tails in the SHAP distributions, indicating a higher sensitivity to extreme feature values. This behavior suggests that multi-task coupling may dampen extreme attributions by leveraging correlated information from the other outputs.

For the sex classification task, the beeswarm plots reveal clearer separations in SHAP values for hematological and metabolic features. Again, the sign and relative magnitude of the dominant contributions are consistent between multi-output and single-output models. The multi-output explanations appear slightly smoother, while the single-output model exhibits sharper contrasts for certain variables, which is consistent with the higher flexibility of a task-specific model.

The SHAP values reveal a consistent biological logic across both architectures. For age, the model identifies a multi-systemic signature: chronic inflammation is represented by albumin and ESR, while the presence of 3-hydroxyisobutyrate (a valine catabolite linked to insulin resistance) alongside glucose and anhydrosorbitol reflects the metabolic shifts inherent to aging. Furthermore, age-related changes in organ function and lipid metabolism are captured via urea and LDL levels. Regarding MetSCORE, the dominance of glucose, followed by LDL and HDL, aligns with the clinical definition of metabolic syndrome, where hyperglycemia and dyslipidemia are primary drivers. For sex, the models rely on well-known physiological dimorphisms, such as erythrocyte counts and creatinine (reflecting muscle mass), alongside urate and lipid species like SPC and glycerol, which vary according to sex-specific adipose tissue distribution and hormonal profiles.

The Table \ref{num:exp:table:2:cosine} reports the cosine similarity and Spearman rank correlation between feature-importance vectors computed for each output variable. Both metrics assess the agreement between the relative importance of input features, with cosine similarity capturing global alignment and Spearman correlation quantifying consistency in feature ranking.

\begin{table}[!h]
    \centering
    \begin{tabular}{|c||c|c|} \hline 
        Output & Cosine similarity & Spearman correlation \\ \hline \hline  
         Age & 0.9867 & 0.9460 \\ \hline 
         MetSCORE & 0.9949 & 0.9003\\ \hline 
         Sex & 0.9740 & 0.8639\\ \hline 
    \end{tabular}
    \vspace{6pt}
    \caption{Cosine similarity and Spearman rank correlation between feature-importance ranking, computed over the full set of input features for each output variable.}
    \label{num:exp:table:2:cosine}
\end{table}

The high cosine similarity and Spearman rank correlation observed across all outputs can be interpreted in light of Proposition \ref{prop:estimates:Phi:02}. Indeed, although the internal representations of the multi-output and single-output models differ, their induced characteristic functions remain sufficiently close on dominant coalitions. The stability bound \ref{SHAP:stability:bound} ensures that small discrepancies in conditional expectations cannot produce large deviations in feature attributions. The near-perfect alignment of feature-importance vectors therefore reflects the quantitative robustness of the Shapley operator rather than a coincidental empirical agreement.


The Spearman rank correlations, while slightly lower, remain consistently high, ranging from 0.86 to 0.95. Age shows the strongest rank agreement (0.9460), indicating that not only the magnitude but also the ordering of the most influential features is highly consistent. For MetSCORE and Sex, the somewhat lower Spearman values reflect mild reordering among less influential features, while still preserving a strong agreement among the most relevant ones.

\section{Conclusions and open problems}\label{sec:conclusion}

In this work, we studied the interpretability of multi-output ML models through the lens of Shapley-based feature attribution. From a theoretical perspective, we formulated feature attribution for vector-valued predictors within an axiomatic framework extending the classical Shapley principles. Our main result establishes a rigidity property: any attribution method satisfying efficiency, symmetry, dummy player, and additivity necessarily decomposes component-wise across output coordinates. This shows that Shapley values for multi-output models cannot explicitly couple different outputs, providing a formal justification for the widespread practice of computing SHAP values independently for each target. From a mathematical perspective, our results show that component-wise SHAP explanations in multi-output models are not a heuristic design choice but the unique consequence of enforcing classical Shapley fairness in vector-valued prediction problems.

Several limitations of the present work should be explicitly pointed out. First, our results do not provide a new attribution method, but rather a structural characterization of existing ones. Second, the rigidity theorem applies only to attribution rules satisfying the classical Shapley axioms in their full strength; alternative notions of fairness may yield different conclusions. Finally, while our numerical experiments focus on biomedical data, the theoretical framework is agnostic to the application domain and does not encode domain-specific semantics.

Building on this theoretical foundation, we investigated the practical implications of multi-output modeling for interpretability. Numerical experiments on a representative biomedical example showed that SHAP explanations produced by multi-output and single-output nonlinear models are quantitatively consistent. In particular, cosine similarity and Spearman correlation analyses demonstrate an almost perfect agreement for the most influential features, with minor discrepancies confined to low-importance variables. These results confirm that joint learning preserves the interpretability of each output while remaining fully consistent with the axiomatic structure of Shapley-based explanations. In this sense, multi-output learning can exploit shared structure across outputs for prediction and efficiency. However, interpretability within the Shapley paradigm remains inherently output-separable, as attribution cannot reflect inter-output dependencies without abandoning classical fairness axioms. We also analyzed the computational consequences of multi-output learning.



Furthermore, the results presented in this work also suggest several directions for future research:
\begin{itemize}
    \item[1.] \textbf{Beyond Shapley axioms for multi}-\textbf{output attribution.} Although we have established that the Shapley operator decomposes component-wise for vector-valued predictors. It remains open to determine whether alternative formulations could capture joint dependencies or interactions between outputs. Developing such a theory would deepen our understanding of interpretability in structured and multi-task prediction problems.
    \item[2.] \textbf{Exploiting output correlations in SHAP estimation.} Since multi-output predictors are particularly effective when outputs are strongly correlated, an important open question is whether correlation structure can be explicitly integrated into the Shapley value estimation process. This may lead to improved sampling strategies, variance-reduction techniques, or new approximation schemes tailored to the multi-output setting. 
    \item[3.] \textbf{Multi}-\textbf{objective optimization and Pareto structure in interpretability.} Since multi-output training naturally corresponds to a multi-objective optimization problem, it would be valuable to explore whether Pareto-optimality principles can inform or constrain Shapley-based explanations. Understanding the relationship between Pareto structure and feature attribution may lead to new theoretical insights or alternative interpretability frameworks.
    \item[4.] \textbf{Dynamic and temporal models.} Another promising direction involves extending the SHAP framework to time-dependent predictors such as recurrent neural networks. In such systems, feature contributions evolve over time, and the notion of fairness may require integrating marginal effects along temporal trajectories. Defining ``dynamic SHAP values'' consistent with the axioms of efficiency and additivity would bridge explainability and control theory, offering mathematically grounded interpretability for sequential or dynamical data.
    \item[5.] \textbf{Efficient computation in high dimensions.} Develop reduced-order, sampling-based, or variational approximations for vector-valued SHAP values that preserve the fundamental axioms of fairness and efficiency while remaining computationally tractable in large-scale models. This issue has been partially addressed in \cite{morales2025shap}, where spectral representations were proposed to approximate SHAP computations while retaining theoretical consistency. However, extending such approaches to the vector-valued setting and establishing rigorous error bounds remain open problems.
\end{itemize}


\bibliographystyle{acm}
\bibliography{biblio.bib}

\appendix 
\section{Proof of the main results}\label{appendix:proofs:main:results}

This appendix collects the complete proofs of the results presented in Section \ref{sec:SHAP_math}. For clarity, proofs are presented in a self-contained manner, following the logical order of the statements in the main text.

\begin{proof}[Proof of Theorem \ref{thm:coordinatewise_decomposition}]
Let $(e_k)_{k=1}^m$ be the canonical basis of $\mathbb{R}^m$. For a scalar game $g\in\MC G_{[n]}$, define the vector-valued game supported on the $k$-th coordinate by
\begin{align*}
    (\iota_k g)(S)\coloneqq g(S)e_k, \quad\text{ for all } S\subseteq[n].    
\end{align*}
Moreover, for a vector-valued game $v\in \MC G^m_{[n]}$, define its $k$-th scalar coordinate game 
\begin{align*}
    v^k\coloneqq \pi_k\circ v\in \MC G_{[n]}.    
\end{align*}

\medskip 
\noindent\textit{Step 1: no leakage across output coordinates.} Fix $k\in[m]$. We claim that for every scalar game $g\in\MC G_{[n]}$, every player $i\in[n]$, and every $\ell\neq k$,
\begin{align}\label{eq:no_leakage}
    \pi_\ell\big(\Phi_i(\iota_k g)\big)=0.    
\end{align}

Because $\Phi$ is additive (Axiom (iv) in Definition \ref{def:SHAP_operator_vector}), it suffices to prove \eqref{eq:no_leakage} for a spanning family of scalar games. Here we use the family of unanimity games  
\begin{align*}
    (u_T)_{\varnothing\neq T\subseteq[n]}, \quad u_T(S)\coloneqq\mathbf{1}_{T\subseteq S},    
\end{align*}
with $\mathbf{1}_T$ denoting the characteristic function of $T$.

\medskip 
\noindent Fix $\varnothing\neq T\subseteq[n]$ and set $v_T=\iota_k u_T$.
\begin{itemize}
    \item If $i\notin T$, then $v_T(S\cup\{i\})=v_T(S)$ for all $S\subseteq([n]\setminus\{i\})$, so $i$ is a dummy player for $v_T$. By the dummy Axiom (iii) in Definition \ref{def:SHAP_operator_vector}, $\Phi_i(v_T)=0$.
    \item If $i,j\in T$, then $v_T(S\cup\{i\})=v_T(S\cup\{j\})$ for all $S\subseteq([n]\setminus\{i,j\})$, so $i$ and $j$ are symmetric in $v_T$. By the symmetry Axiom (ii) in Definition \ref{def:SHAP_operator_vector}, $\Phi_i(v_T)=\Phi_j(v_T)$.
\end{itemize}
Hence there exists some $a\in\mathbb{R}^m$ such that
\begin{align*}
    \Phi_i(v_T)=a \text{ for } i\in T \quad\text{ and }\quad \Phi_i(v_T)=0 \text{ for } i\notin T.    
\end{align*}
Now, applying the efficiency Axiom (i) in Definition \ref{def:SHAP_operator_vector}, we get
\begin{align*}
    \sum_{i=1}^n \Phi_i(v_T)=v_T([n])=u_T([n])e_k=e_k.    
\end{align*}
On the other hand, we also have that 
\begin{align*}
    \sum_{i=1}^n \Phi_i(v_T)=\sum_{i\in T} a=|T|a.    
\end{align*}

Thus $a=|T|^{-1}e_k$, which has no $\ell$-th component for $\ell\neq k$. Therefore $\pi_\ell(\Phi_i(v_T))=0$ for $\ell\neq k$. This proves \eqref{eq:no_leakage} for the games $(u_T)_{\varnothing\neq T\subseteq[n]}$, hence for all $g\in \MC G_{[n]}$ by additivity.

For each coordinate $k\in[m]$, the projection of $\Phi$ onto the $k$-th coordinate defines a scalar operator $\Phi^{(k)}: \mathcal{G}_{[n]}\to \mathbb{R}^n$. By projecting axioms (i)-(iv) onto coordinate $k$, we deduce that $\Phi^{(k)}$ satisfies the classical Shapley axioms. By the scalar uniqueness, $\Phi^{(k)}$ must coincide with the scalar Shapley operator. Hence all coordinates use the same scalar operator, which establishes the component-wise decomposition.

\medskip 
\noindent\textit{Step 2: definition of the scalar operator and decomposition.} Fix $k\in[m]$. Using Step 1, for each scalar game $g\in \MC G_{[n]}$ and each player $i$, the vector $\Phi_i(\iota_k g)$ lies in $\mathrm{span}\{e_k\}$. Hence there exists a unique scalar $\phi_i(g)$ such that
\begin{align}\label{eq:game_decomposed}
    \Phi_i(\iota_k g)=\phi_i(g)e_k.    
\end{align}

Define $\phi(g)\coloneqq (\phi_1(g),\ldots,\phi_n(g))\in\mathbb{R}^n$. By construction, $\phi: \MC G_{[n]}\to\mathbb{R}^n$ is additive (it inherits this from $\Phi$). Now take an arbitrary $v\in \MC G^m_{[n]}$. For each coalition $S\subset[n]$, we have
\begin{align*}
    v(S)=\sum_{k=1}^m \pi_k(v(S))e_k \quad\Longleftrightarrow\quad v=\sum_{k=1}^m \iota_k(v^{(k)}).    
\end{align*}
Using the additivity of $\Phi$ and \eqref{eq:game_decomposed}, we then conclude that
\begin{align*}
    \Phi_i(v) = \sum_{k=1}^m \Phi_i(\iota_k(v^{(k)})) = \sum_{k=1}^m \phi_i(v^{(k)})e_k = \big(\phi_i(v^{(1)}),\dots,\phi_i(v^{(m)})\big).    
\end{align*}
\end{proof}

Theorem \ref{thm:coordinatewise_decomposition} shows that the Shapley axioms rigidly constrain the vector-valued case: any admissible attribution operator must act coordinate-wise, so no coupling between output components is compatible with efficiency, symmetry, dummy, and additivity. This structural fact is the key ingredient for the uniqueness Theorem \ref{thm:existence:uniqueness:Shapley:formula}. In preparation to the proof of this result, the following technical Lemmas are required.

\begin{lemma}\label{lemma:decomposition:GNm}
The vector space $\mathcal{G}_{[n]}^m$ decomposes as a direct sum of $m$ scalar subspaces:
\begin{align*}
    \mathcal{G}_{[n]}^m= \mathcal{G}_{[n]}^{(1)} \oplus \ldots \oplus \mathcal{G}_{[n]}^{(m)},
\end{align*}
where for each $j\in m$, $\mathcal{G}_{[n]}^{(j)}$ is defined by 
\begin{align*}
    \mathcal{G}_{[n]}^{(j)}\coloneqq\big\{v\in \mathcal{G}_{[n]}^m \,:\, v^{(k)}\equiv 0\text{ for }k\neq j\big\}.
\end{align*}    
\end{lemma}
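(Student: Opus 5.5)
The plan is to prove the direct-sum decomposition by elementary linear algebra in the vector space $\mathcal{G}_{[n]}^m$, using the coordinate maps already introduced in the proof of Theorem \ref{thm:coordinatewise_decomposition}: the projection $v\mapsto v^{(k)}=\pi_k\circ v$ and the embedding $\iota_k g(S)=g(S)e_k$. We must check three things: each $\mathcal{G}_{[n]}^{(j)}$ is a linear subspace of $\mathcal{G}_{[n]}^m$, these subspaces together span $\mathcal{G}_{[n]}^m$, and their sum is direct.

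\emph{Subspaces.} Take $v,w\in\mathcal{G}_{[n]}^{(j)}$ and $\alpha,\beta\in\R$. Because $\pi_k$ is linear, $(\alpha v+\beta w)^{(k)}=\alpha v^{(k)}+\beta w^{(k)}$. For $k\neq j$ this is the zero function. Also $(\alpha v+\beta w)(\varnothing)=0$. Hence $\alpha v+\beta w\in\mathcal{G}_{[n]}^{(j)}$.

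It will also help to note that $\mathcal{G}_{[n]}^{(j)}=\iota_j(\mathcal{G}_{[n]})$. If $v\in\mathcal{G}_{[n]}^{(j)}$, then $v(S)=v^{(j)}(S)e_j$ for every $S$, so $v=\iota_j v^{(j)}$ with $v^{(j)}(\varnothing)=0$. Conversely, every $\iota_j g$ obviously has zero $k$-th coordinate for $k\neq j$.

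\emph{Spanning.} For any $v\in\mathcal{G}_{[n]}^m$, expand $v(S)$ in the canonical basis coalition by coalition. This gives the identity already used in Step 2 of the proof of Theorem \ref{thm:coordinatewise_decomposition}:
\begin{align*}
v=\sum_{k=1}^m \iota_k\big(v^{(k)}\big).
\end{align*}
Each term lies in $\mathcal{G}_{[n]}^{(k)}$, since $v^{(k)}(\varnothing)=\pi_k(0)=0$.

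\emph{Directness.} Suppose $\sum_{k} w_k=0$ with $w_k\in\mathcal{G}_{[n]}^{(k)}$. Fix an index $j$ and apply $\pi_j$ to both sides at each coalition $S$. Every $w_k$ with $k\neq j$ has $\pi_j\circ w_k\equiv 0$, so we get $w_j^{(j)}\equiv 0$. By definition of $\mathcal{G}_{[n]}^{(j)}$, the other coordinates of $w_j$ already vanish, so $w_j=0$. Doing this for every $j$ shows the decomposition is unique, i.e.\ the sum is direct.

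An optional remark is that the map $v\mapsto(v^{(1)},\dots,v^{(m)})$ is a linear isomorphism $\mathcal{G}_{[n]}^m\cong(\mathcal{G}_{[n]})^m$. There is no real obstacle in this argument. The only point requiring care is the normalization $v(\varnothing)=0$: one must check that the coordinate games $v^{(k)}$ inherit it, so that each piece $\iota_k v^{(k)}$ genuinely lies in $\mathcal{G}_{[n]}^m$.
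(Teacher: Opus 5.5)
Your proof is correct and follows essentially the same route as the paper: spanning via the coordinatewise splitting $v=\sum_k \iota_k(v^{(k)})$ (the paper's $v^{[j]}$), and directness by projecting onto coordinate $j$. The only cosmetic differences are that you prove directness through uniqueness of the zero representation instead of the paper's criterion $\mathcal{G}_{[n]}^{(j)}\cap\sum_{k\neq j}\mathcal{G}_{[n]}^{(k)}=\{0\}$, and that you explicitly check the subspace property, which the paper leaves implicit.
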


\begin{proof}
For any $v\in \mathcal{G}_{[n]}^m$, define $v^{[j]}\in \mathcal{G}_{[n]}^{(j)}$ by 
\begin{align*}
    v^{[j]}(S)=\Big(0,\ldots, 0,v^{(j)}(S),0,\ldots, 0\Big).
\end{align*}
Then, it follows that 
\begin{align*}
    v=\sum_{j=1}^{m} v^{[j]}\quad\text{ and }\quad v^{[j]}(\varnothing)=0.
\end{align*}
Let us define
\begin{align*}
    \sum_{k\neq j} \mathcal{G}_{[n]}^{(k)}:=\left\{ \sum_{k\neq j} w^{(k)}\,:\, w^{(k)}\in \mathcal{G}_{[n]}^{(k)} \right\}.
\end{align*}

If $w\in \mathcal{G}_{[n]}^{(j)} \cap \sum_{k\neq j} \mathcal{G}_{[n]}^{(k)}$, then $w^{(j)}\equiv 0$ for $k\neq j$. Hence, $w\equiv 0$ and thus the sum is direct. Therefore, we have
\begin{align*}
    \mathcal{G}_{[n]}^m=\mathcal{G}_{[n]}^{(1)} \oplus \ldots \oplus \mathcal{G}_{[n]}^{(m)}.
\end{align*}
\end{proof}

This structural result has an immediate conceptual consequence: every vector-valued game can be viewed as a superposition of $m$ scalar games, each corresponding to one output coordinate. Leveraging this decomposition, we can now study the class of value operators that satisfy the axioms of efficiency, symmetry, dummy, and additivity, and determine how these operators act across coordinates.

\begin{lemma}\label{lemma:coordinates:Shapley}
Let $\Phi: \mathcal{G}_{[n]}^m \to \R^{m\cdot n}$ be a linear operator satisfying axioms (i)-(iv) in Definition \ref{def:SHAP_operator_vector}. Then, there exist linear mappings $\Psi^{(j)}: \mathcal{G}_{[n]} \to \R^n$ such that for all $v\in \mathcal{G}_{[n]}^m$ and $i\in [n]$, we have
\begin{align*}
    \Phi_i(v):=\left( \Psi_i^{(1)}(v^{(1)}),\ldots, \Psi_i^{(m)}(v^{(m)}) \right).
\end{align*}
Moreover, each $\Psi^{(j)}$ satisfies the scalar Shapley axioms of Definition \ref{def:SHAP_operator}.
\end{lemma}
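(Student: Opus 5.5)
The plan is to combine the direct-sum decomposition of Lemma \ref{lemma:decomposition:GNm} with the no-leakage property from Step 1 of the proof of Theorem \ref{thm:coordinatewise_decomposition}. For each $j\in[m]$ we use the embedding $\iota_j:\mathcal{G}_{[n]}\to\mathcal{G}_{[n]}^{(j)}$, $(\iota_j g)(S)=g(S)e_j$. This map is a linear isomorphism onto $\mathcal{G}_{[n]}^{(j)}$, with inverse $w\mapsto w^{(j)}$. We then define
\begin{align*}
\Psi^{(j)}_i(g)\coloneqq \pi_j\big(\Phi_i(\iota_j g)\big),\qquad g\in\mathcal{G}_{[n]},\ i\in[n].
\end{align*}
Linearity of $\Psi^{(j)}$ is immediate, because it is a composition of the linear maps $\iota_j$, $\Phi$ and $\pi_j$.

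For the representation formula, let $v\in\mathcal{G}^m_{[n]}$. Lemma \ref{lemma:decomposition:GNm} gives $v=\sum_{j}\iota_j(v^{(j)})$, and linearity of $\Phi$ then gives $\Phi_i(v)=\sum_j\Phi_i(\iota_j v^{(j)})$. The key input is \eqref{eq:no_leakage}: for $\ell\neq j$ we have $\pi_\ell(\Phi_i(\iota_j g))=0$. Hence $\Phi_i(\iota_j v^{(j)})=\Psi^{(j)}_i(v^{(j)})e_j$, and summing over $j$ yields the stated coordinate formula. If we prefer a self-contained argument, we can re-derive \eqref{eq:no_leakage} as in Step 1 from the unanimity games $u_T$. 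Dummy and symmetry force $\Phi(\iota_j u_T)$ to equal a common vector $a$ on $T$ and $0$ off $T$, and efficiency then forces $a=|T|^{-1}e_j$. Since the $u_T$ span $\mathcal{G}_{[n]}$, linearity extends the conclusion to every scalar game.

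Next we check the scalar axioms for each $\Psi^{(j)}$ by transporting them through $\iota_j$.
\begin{itemize}
\item[(i)] \emph{Efficiency.} Vector efficiency for $\iota_j g$ gives $\sum_i\Phi_i(\iota_j g)=g([n])e_j$, and applying $\pi_j$ gives $\sum_i\Psi^{(j)}_i(g)=g([n])$.
\item[(ii)] \emph{Symmetry.} If $i,k$ are symmetric in $g$, then $g(S\cup\{i\})=g(S\cup\{k\})$ implies $(\iota_jg)(S\cup\{i\})=(\iota_jg)(S\cup\{k\})$, so they are symmetric in $\iota_j g$. Vector symmetry gives $\Phi_i(\iota_j g)=\Phi_k(\iota_j g)$, and projecting with $\pi_j$ gives $\Psi^{(j)}_i(g)=\Psi^{(j)}_k(g)$.
\item[(iii)] \emph{Dummy.} A dummy player of $g$ is a dummy player of $\iota_j g$, so $\Phi_i(\iota_j g)=0$, and projecting gives $\Psi^{(j)}_i(g)=0$.
\item[(iv)] \emph{Additivity.} This is already established as linearity of $\Psi^{(j)}$.
\end{itemize}

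The only delicate point is the no-leakage step, i.e.\ that off-diagonal coordinates $\pi_\ell\Phi_i(\iota_jg)$ vanish. Without it the natural definition would include cross terms $\pi_j\Phi_i(\iota_\ell v^{(\ell)})$, and the representation would fail. That step is settled by the unanimity-basis argument, which uses all three of dummy, symmetry and efficiency, and spanning requires including every nonempty $T$. The rest is routine bookkeeping with linear maps.
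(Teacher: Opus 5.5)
Your proposal is correct and follows the paper's route: define $\Psi^{(j)}=\pi_j\circ\Phi\circ\tau_j$ via the coordinate embedding, use Lemma \ref{lemma:decomposition:GNm} with additivity to write $\Phi(v)=\sum_j\Phi(v^{[j]})$, and obtain the scalar axioms by projecting the vector axioms onto coordinate $j$. The only difference is that you explicitly justify the no-leakage fact through \eqref{eq:no_leakage} and the unanimity-game argument, whereas the paper's proof of the lemma simply asserts that each $\Phi(v^{[j]})$ has nonzero entries only in coordinate $j$, relying implicitly on Step 1 of the proof of Theorem \ref{thm:coordinatewise_decomposition}.
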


\begin{proof}
From Lemma \ref{lemma:decomposition:GNm} and additivity, it follows that 
\begin{align*}
    \Phi(v)=\sum_{j=1}^{m} \Phi(v^{[j]}).
\end{align*}
Notice that each $\Phi(v^{[j]})$ has nonzero entries only in coordinate $j$. Now, we set 
\begin{align*}
    \Psi^{(j)}(u):=(\pi_j \circ \Phi \circ \tau_j)(u),
\end{align*}
where $\tau_j:\mathcal{G}_{[n]} \to \mathcal{G}_{[n]}^{(j)}$ embeds a scalar game $u$ into the $j$-th coordinate, and $\pi_j:\R^{mn} \to \mathbb{R}^n$ projects onto coordinate $j$. Then, it is clear that for each $i\in [n]$, we can write $\Phi_i$ in the form:
\begin{align*}
    \Phi_i(v):=(\Psi_i^{(1)}(v^{(1)}),\ldots, \Psi_i^{(m)}(v^{(m)})).
\end{align*}

Projecting the vectorial axioms (i)-(iv) of Definition \ref{def:SHAP_operator_vector} onto coordinate $j$ shows that each $\Psi^{(j)}$ satisfies the scalar axioms of efficiency, symmetry, dummy, and additivity.
\end{proof}

\begin{proof}[Proof of Theorem \ref{thm:existence:uniqueness:Shapley:formula}]

The proof is divided into two steps:

\noindent {\bf Existence.} For each $i\in [n]$, define $\phi_i(v)$ by \eqref{def:Phi:i:vector}. We verify the axioms of Definition \ref{def:SHAP_operator_vector}.
\begin{itemize}
    \item The additivity follows because the definition \eqref{def:Phi:i:vector} involves finite sums of linear functionals of $v$. Therefore, for all $i\in[n]$, $v,w\in \mathcal{G}_{[n]}^m$, and $\alpha,\beta\in \mathbb{R}$, we have 
    \begin{align*}
        \Phi_i(\alpha v+ \beta w)=\alpha \Phi_i(v) + \beta \Phi_i(w).
    \end{align*}

    \item For any permutation $\sigma \in \mathfrak{S}_{[n]}$ and permuted game $v^{\sigma}(S)=v(\sigma^{-1}(S))$, it follows that 
    \begin{align*}
        P_{\sigma(i)} (\sigma \circ \pi)=\sigma (P_i(\pi)),
    \end{align*}
    so that $\phi_{\sigma(i)}(v^{\sigma})=\phi_i(v)$, i.e., Symmetry follows.
    \item Dummy is direct.
    \item Fix $\pi \in \mathfrak{S}_{[n]}$. Summing over $i\in [n]$ yields a telescoping sum
    \begin{align}\label{telecopy:sum}
        \sum_{i\in [n]} \Big[v(P_i(\pi)\cup \{i\}) - v(P_i(\pi))\Big]=v([n])-v(\varnothing)=v([n]).
    \end{align}
    Summing over all $\pi$ gives 
    \begin{align*}
        \sum_{i\in [n]} \phi_i(v)=v([n]),
    \end{align*}
    i.e., efficiency holds. 
\end{itemize}

\noindent {\bf Uniqueness.} Let $\widetilde{\Psi}$ be any operator satisfying (i)-(iv). By Lemma \ref{lemma:coordinates:Shapley}, we know that each coordinate $\widetilde{\Psi}_i$ can be written in the form:
\begin{align*}
    \widetilde{\Phi}_i(v)=\left( \Psi_i^{(1)}(v^{(1)}),\ldots , \Psi_i^{(m)}(v^{(m)}) \right),
\end{align*}
for some linear maps $\Psi_i^{(j)}: \mathcal{G}_{[n]} \to \mathbb{R}^{[n]}$. Each $\Psi^{(j)}$ satisfies the scalar Shapley axioms, hence by Theorem \ref{thm:SHAP_scalar} it follows that
\begin{align*}
    \Psi_i^{(j)}(v^{(j)})=\dfrac{1}{n!} \sum_{\pi \in \mathfrak{S}_{[n]}} \left[ v^{(j)}(P_i(\pi)\cup \{i\}) - v^{(j)}(P_i(\pi)) \right].
\end{align*}
Collecting coordinates gives exactly the formula \eqref{def:Phi:i:vector}. Therefore, $\widetilde{\Phi}=\Phi$.
\end{proof}

To conclude, we verify that the SHAP values defined in the multi-output setting satisfy the expected reconstruction property. This result follows directly from vector efficiency applied to the characteristic function induced by the predictor and formalizes the exact decomposition of the prediction into feature-wise contributions.

\begin{proof}[Proof of Proposition \ref{corollary:equality:efficiency}]
For a measurable function $f:\R^n \to \R^m$, define $v_f(S)$ as in \eqref{def:v:f:R:m}. We can immediately see that for the empty and full coalitions it holds
\begin{align*}
    v_f(\varnothing)=0\quad\text{ and }\quad v_f([n])=f(x)- \mathbb{E}_{X\sim \mu}[f(X)].
\end{align*}

Appplying Theorem \ref{thm:existence:uniqueness:Shapley:formula} to the game $\tilde{v}_f$ yields
\begin{align*}
    \sum_{i=1}^n \phi_i(f;x)=v_f([n])=f(x)-\mathbb{E}_{X \sim \mu}[f(X)].
\end{align*}



\end{proof}

\section{The role of correlation in SHAP computations}\label{app:B}

The theoretical analysis developed in this paper establishes that, under the Shapley axioms, feature attribution for multi-output models necessarily decomposes component-wise across output coordinates. This result constrains the structural form of admissible explanations, but it does not preclude the presence of strong statistical dependencies among input features or among output variables, which are ubiquitous in practical learning problems.

In this appendix, we investigate the role of input and output correlations in a simplified and analytically tractable setting, focusing on linear multi-output models. This choice allows us to explicitly characterize how correlations in the input space affect conditional expectations and marginal contributions, and how correlations among outputs influence learning efficiency and computational behavior, without contradicting the axiomatic rigidity results established in Section \ref{sec:SHAP_math}. The analysis clarifies that correlation impacts interpretability indirectly, through modeling and computation, rather than through the structure of the attribution itself.

To see concretely how these mechanisms arise, we begin by examining the effect of input correlations on the characteristic function $v_f(S)$ defined in \eqref{def:v:f:R:m} and the resulting conditional expectations that underpin SHAP values.

This characteristic function depends critically on the joint distribution of the input variables. When the features $(X_1,\dots,X_n)$ are statistically independent, conditional expectations factorize, leading to simple, coordinate-wise attributions. In contrast, when features are correlated (as it happens in most realistic datasets) conditioning on one variable modifies the distribution of the others, and this dependence propagates into the SHAP values, altering the way contributions are assigned across features.

To illustrate these effects clearly, we restrict attention to a simple yet analytically tractable setting: linear models with Gaussian inputs. Specifically, we consider
\begin{align}\label{eq:linear}
    f:\R^n\to\R^m, \quad f(x)=B_0 + B^\top x \quad\text{ with } B_0\in\R^m \text{ and } B\in\R^{n\times m},
\end{align}
and assume that
\begin{align*}
    X=(X_1,\ldots,X_n)\sim\MC N(\mu,\Sigma)
\end{align*}
follows a multivariate normal distribution with mean vector $\mu = \mathbb{E}[X] = (\mu_1,\dots,\mu_n)^\top\in\R^n$ and positive-definite covariance matrix $\Sigma\in\R^{n\times n}$. Within this unified Gaussian framework, the independence or correlation of features is encoded entirely in the structure of $\Sigma$: 
\begin{itemize}
    \item independent features correspond to a diagonal covariance matrix, $\Sigma = \text{diag}(\sigma_1^2, \ldots, \sigma_n^2)$;
    \item correlated features arise when $\Sigma$ contains non-zero off-diagonal entries, meaning that at least some variables exhibit linear dependence.
\end{itemize}

\subsection{Independent features}

When $\Sigma$ is diagonal, each coordinate $X_i$ varies independently of the others. In this case, the conditional expectation that defines $v_f(S)$ simplifies, as fixing $X_S = x_S$ does not alter the distribution of the remaining features. Consequently, the SHAP value of each feature depends only on its own coordinate. In more detail, we can prove the following result.
\begin{proposition}\label{prop:SHAP_linear}
Let $f:\R^n\to\R^m$ be the linear predictor defined in \eqref{eq:linear}. Assume that the input vector $X\sim\MC N(\mu,\Sigma)$ has diagonal covariance $\Sigma=\emph{diag}(\sigma_1^2,\dots,\sigma_n^2)$, so that the components of $X$ are mutually independent. Then, for every $x=(x_1,\ldots,x_n)\in\R^n$ and $i\in[n]$, we have
\begin{align*}
    \phi_i(f;x)=B_i(x_i-\mu_i)\in\R^m,
\end{align*}
where $B_i\in\R^m$ is the $i$-th row of the matrix $B$. 
\end{proposition}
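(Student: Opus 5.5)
The plan is to compute the characteristic function $v_f$ from \eqref{def:v:f:R:m} explicitly and then recognise the game as additive, so that its Shapley values can be read off from the axioms.

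\textbf{Step 1: the characteristic function.} Since $f(X)=B_0+\sum_{j=1}^n B_j X_j$ with $B_j\in\R^m$ the $j$-th row of $B$, linearity of conditional expectation gives
\begin{align*}
    \mathbb{E}[f(X)\mid X_S=x_S]=B_0+\sum_{j\in S}B_j x_j+\sum_{j\notin S}B_j\,\mathbb{E}[X_j\mid X_S=x_S].
\end{align*}
Because $\Sigma$ is diagonal, the Gaussian components are mutually independent. For $j\notin S$ this yields $\mathbb{E}[X_j\mid X_S=x_S]=\mu_j$; one can also read this off the Gaussian conditioning formula, whose cross-covariance block $\Sigma_{\bar S S}$ vanishes. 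Subtracting $\mathbb{E}[f(X)]=B_0+\sum_j B_j\mu_j$ then gives
\begin{align*}
    v_f(S)=\sum_{j\in S}B_j(x_j-\mu_j).
\end{align*}
This equals $0$ for $S=\varnothing$, so $v_f\in\MC G^m_{[n]}$.

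\textbf{Step 2: the marginal contributions.} For any $S\subseteq[n]\setminus\{i\}$,
\begin{align*}
    v_f(S\cup\{i\})-v_f(S)=B_i(x_i-\mu_i),
\end{align*}
which does not depend on $S$.

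\textbf{Step 3: the Shapley value.} Insert this into the subset formula \eqref{formula:Shapley:values:mdim}. The weights $|S|!(n-|S|-1)!/n!$ over $S\subseteq[n]\setminus\{i\}$ sum to $1$, as already used in the proof of Proposition \ref{prop:estimates:Phi}. Hence $\phi_i(f;x)=B_i(x_i-\mu_i)$.

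An equivalent route avoids the weight sum. Write $v_f=\sum_j w_j$ with $w_j(S)=B_j(x_j-\mu_j)\mathbf 1_{j\in S}$, which makes every $k\neq j$ a dummy player in $w_j$. Efficiency then assigns the whole of $w_j([n])=B_j(x_j-\mu_j)$ to player $j$, and additivity from Theorem \ref{thm:existence:uniqueness:Shapley:formula} assembles the pieces.

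\textbf{Main obstacle.} The only real point is Step 1: justifying that the conditional expectation, understood in the sense of regular conditional probabilities, of the unconditioned coordinates equals their means. This follows from independence together with the fact that $X_j$ is integrable. The result is valid for every $x$, not merely $\mu$-almost every $x$, by choosing the natural version of the conditional law, namely the product measure. Everything after Step 1 is a routine computation.
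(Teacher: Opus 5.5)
Your proposal is correct and follows the paper's proof essentially step for step: use independence to get $\mathbb{E}[X_j\mid X_S=x_S]=\mu_j$ for $j\notin S$, observe that $v_f(S\cup\{i\})-v_f(S)=B_i(x_i-\mu_i)$ does not depend on $S$, and use that the Shapley weights sum to one. Your $v_f(S)=\sum_{j\in S}B_j(x_j-\mu_j)$ is in fact the correctly centered characteristic function; the paper's displayed $v_f(S)$ omits the subtraction of $\mathbb{E}[f(X)]$, which is harmless because only marginal differences enter the formula.
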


\begin{proof}
Under the Gaussian assumption with diagonal covariance, conditioning on $X_S=x_S$ leaves all other coordinates independent and centered at their means, that is, 
\begin{align*}
    \mathbb{E}[X_j\mid X_S=x_S] = \begin{cases} x_j, & j\in S \\ \mu_j, & j\notin S \end{cases}.
\end{align*}
Consequently, 
\begin{align*}
    v_f(S) = \mathbb{E}[f(X)\mid X_S=x_S] = B_0+B^{\top}\mu+\sum_{j\in S}B_j(x_j-\mu_j).
\end{align*}
In particular, for any subset $S\subseteq[n]\setminus\{i\}$, we get the expression
\begin{align*}
    v_f(S\cup\{i\})-v_f(S)=B_i(x_i-\mu_i),    
\end{align*}
which is independent of $S$. Averaging this marginal contribution over all subsets with the SHAP weights 
\begin{align*}
    w(S)=\frac{|S|!(n-|S|-1)!}{n!}     
\end{align*}
yields directly
\begin{align*}
    \varphi_i(f;x)=B_i(x_i-\mu_i).    
\end{align*}
\end{proof}

Proposition \ref{prop:SHAP_linear} shows that, under independence, the SHAP value of each feature depends exclusively on its own variable. In other words, the attribution $\varphi_i(f;x)$ is determined solely by the deviation of $x_i$ from its mean $\mu_i$, and is unaffected by the values of the remaining features. Each coordinate thus acts independently, contributing linearly to the model output through its direct effect. In this setting, the SHAP decomposition is purely local and reproduces the standard additive structure of the linear model.

\subsection{Correlated features}

The situation illustrated in the previous Section changes when the features $X\sim\MC N(\mu,\Sigma)$ are correlated, i.e. the covariance $\Sigma$ is a full positive-definite matrix. For any subset $S\subseteq [n]$, let us define:
\begin{itemize}
    \item $\mu_S\in \R^{|S|}$ as the restriction of $\mu$ to the coordinates in $S$;
    \item $\Sigma_{S,S}\in \R^{|S|\times |S|}$ as the principal submatrix of $\Sigma$ indexed by $S$;
    \item $\Sigma_{:,S}\in \R^{n\times |S|}$ as the submatrix consisting of the columns of $\Sigma$ indexed by $S$.
\end{itemize}
Moreover, define the conditional expectation matrix as follows:
\begin{align}\label{eq:conditional}
    A_S\coloneqq \Sigma_{:,S} \Sigma_{S,S}^{-1}\in \R^{n\times |S|}.
\end{align}
We have the following result.

\begin{proposition}\label{prop:SHAP_linear_correlated}
Let $f:\R^n\to\R^m$ be the linear predictor defined in \eqref{eq:linear}. Assume that the input vector $X\sim\MC N(\mu,\Sigma)$ has a full, positive-definite covariance matrix, possibly with non-zero off-diagonal entries, so that the components of $X$ are correlated. For each subset $S\subseteq[n]$, define $A_S\in \R^{n\times |S|}$ as in \eqref{eq:conditional} and let $\widehat{A}_R\in\R^{n\times n}$ denote the matrix obtained by embedding $A_S$ into the full coordinate space, inserting zeros in the columns outside $S$. For each feature index $i\in [n]$, define the matrix 
\begin{align*}
    M_i(\Sigma)\coloneqq \sum_{S\subseteq ([n]\setminus \{i\})} w(S) \Big(\widehat{A}_{S\cup \{i\}} - \widehat{A}_S\Big) \in \R^{n\times n},
\end{align*}
where 
\begin{align*}
    w(S)=\frac{|S|!(n-|S|-1)!}{n!}     
\end{align*}
are the SHAP weights in formula \eqref{formula:Shapley:values:mdim}. Then, for every $x=(x_1,\ldots,x_n)\in\R^n$ and $i\in[n]$, the SHAP value $\phi_i(f;x)$ admits the exact representation
\begin{align*}
    \phi_i(x)=B^\top M_i(\Sigma)(x-\mu)\in\R^m.
\end{align*}
\end{proposition}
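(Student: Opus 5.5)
The argument mirrors the proof of Proposition \ref{prop:SHAP_linear}. We first compute the characteristic function $v_f(S)$ explicitly through the Gaussian conditioning formula. Then we use the linearity of the subset form \eqref{formula:Shapley:values:mdim} to collect the marginal contributions into the matrix $M_i(\Sigma)$.

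\textbf{Step 1 (conditional mean).} For $X\sim\MC N(\mu,\Sigma)$ with $\Sigma$ positive definite, every principal submatrix $\Sigma_{S,S}$ is invertible. The standard Gaussian conditioning formula then gives, for every nonempty $S\subseteq[n]$,
\begin{align*}
    \mathbb{E}[X\mid X_S=x_S]=\mu+\Sigma_{:,S}\Sigma_{S,S}^{-1}(x_S-\mu_S)=\mu+A_S(x_S-\mu_S)=\mu+\widehat{A}_S(x-\mu).
\end{align*}
The last equality holds because $\widehat{A}_S$ has zero columns outside $S$. For $S=\varnothing$ we adopt the convention $\widehat{A}_\varnothing=0$, which agrees with $\mathbb{E}[X]=\mu$. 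We also record a consistency check: for $j\in S$, the $j$-th row of $A_S$ is the corresponding row of the identity. Hence the conditional mean reproduces $x_j$, as it should.

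\textbf{Step 2 (characteristic function).} Since $f$ is affine, it commutes with conditional expectation: $\mathbb{E}[f(X)\mid X_S=x_S]=B_0+B^\top\mathbb{E}[X\mid X_S=x_S]$. Subtracting $\mathbb{E}[f(X)]=B_0+B^\top\mu$, the centered characteristic function \eqref{def:v:f:R:m} becomes
\begin{align*}
    v_f(S)=B^\top\widehat{A}_S(x-\mu),\qquad S\subseteq[n].
\end{align*}
This satisfies $v_f(\varnothing)=0$, so $v_f\in\MC G^m_{[n]}$. The marginal contributions are therefore
\begin{align*}
    v_f(S\cup\{i\})-v_f(S)=B^\top\big(\widehat{A}_{S\cup\{i\}}-\widehat{A}_S\big)(x-\mu).
\end{align*}

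\textbf{Step 3 (aggregation).} We insert these marginal contributions into \eqref{formula:Shapley:values:mdim}. Because $B^\top$ and $(x-\mu)$ do not depend on $S$, they factor out of the finite weighted sum, which leaves
\begin{align*}
    \phi_i(f;x)=B^\top\Big(\sum_{S\subseteq[n]\setminus\{i\}}w(S)\big(\widehat{A}_{S\cup\{i\}}-\widehat{A}_S\big)\Big)(x-\mu)=B^\top M_i(\Sigma)(x-\mu),
\end{align*}
which is the claimed representation.

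No step is genuinely hard. The main point of care is Step 1: correctly embedding $A_S$ into $\R^{n\times n}$ so that $A_S(x_S-\mu_S)=\widehat{A}_S(x-\mu)$ holds, and treating the empty coalition consistently. The statement's $\widehat{A}_R$ should be read as $\widehat{A}_S$, and the conditional expectation should be interpreted via the Gaussian regular conditional distribution. As a sanity check, when $\Sigma$ is diagonal, $\widehat{A}_S$ is the coordinate projection onto $S$. Then $\widehat{A}_{S\cup\{i\}}-\widehat{A}_S=e_ie_i^\top$, so $M_i=e_ie_i^\top$ and we recover Proposition \ref{prop:SHAP_linear}.
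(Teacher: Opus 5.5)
Your proposal is correct and follows the paper's proof essentially step for step. Both use Gaussian conditioning to get $\mathbb{E}[X\mid X_S=x_S]=\mu+\widehat{A}_S(x-\mu)$, then linearity of $f$ to get $v_f(S)=B^\top\widehat{A}_S(x-\mu)$, then factor $B^\top$ and $(x-\mu)$ out of the weighted sum. Your convention $\widehat{A}_\varnothing=0$ for the empty coalition and your diagonal-covariance sanity check are useful refinements that the paper leaves implicit.
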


\begin{proof}
For any subset $S\subseteq [n]$ the conditional mean of $X$ given $X_S=x_s$ is linear, i.e., 
\begin{align*}
    \mathbb{E}[X|X_S=x_s]=\mu + A_S(x_s-\mu_S).
\end{align*}
Using the full embedding $\widehat{A}_S$ we equivalently write 
\begin{align*}
    \mathbb{E}[X|X_S=x_S]=\mu + \widehat{A}_S(x-\mu).
\end{align*}

Therefore, the conditional expected model value is
\begin{align*}
    \mathbb{E} [f(X)|X_S=x_S]=B_0 + B^\top \mu + B^\top \widehat{A}_S(x-\mu).
\end{align*}


On the other hand, the conditional expectation is given as
\begin{align*}
    \mathbb{E}_{X\sim \mu} [f(X)]=B_0+B^\top \mu,
\end{align*}
so that $v_f(S)=B^\top \hat{A}_{S}(x-\mu)$. Moreover, for each $i\in [n]$ and $S\subseteq [n] \setminus \{i\}$, we have
\begin{align*}
    v_f(S\cup \{i\}) - v_f(S) = B^\top\Big(\widehat{A}_{S\cup \{i\}}(x-\mu) - \widehat{A}_S(x-\mu)\Big) = B^\top \Big(\widehat{A}_{S\cup \{i\}} - \widehat{A}_S\Big)(x-\mu).
\end{align*}
This immediately yields
\begin{align*}
    \phi_i(f;x) &= \sum_{S\subseteq ([n]\setminus \{i\})} w(S) \Big(v(S\cup \{i\}) - v(S)\Big) 
    \\
    &= B^\top \left(\sum_{S\subseteq ([n]\setminus \{i\})} w(S) \Big(\widehat{A}_{S\cup \{i\}} - \widehat{A}_S\Big) \right) (x-\mu) = B^\top M_i(\Sigma) (x-\mu).
\end{align*}
\end{proof}

In this correlated setting, the SHAP value of each feature is no longer confined to its own coordinate but depends on all components of the vector $x-\mu$. Indeed, in coordinates, one may rewrite
\begin{align*}
    \phi_i(f;x) = \sum_{j=1}^n\sum_{k=1}^n B_j\big(M_i(\Sigma)\big)_{j,k}(x_k-\mu_k),
\end{align*}
where $B_j$ denotes the $j$-th row of $B$. 

In particular, the matrix $M_i(\Sigma)$ redistributes the total contribution of each feature according to the covariance structure: the diagonal entries $(M_i(\Sigma))_{i,i}$ capture the direct effect of the $i$-th feature, while the off-diagonal terms encode indirect influences arising from correlations among features. As a consequence, the attribution $\phi_i(f;x)$ aggregates information not only from $x_i$ but also from correlated variables $x_k$, revealing how dependence in the input distribution couples the feature contributions.

\subsection{Correlated outputs and implications for Shapley-based explanations}

In many applications, particularly in biomedical settings, output variables are driven by shared physiological mechanisms and therefore exhibit non-trivial correlations. In such cases, the learning problem naturally acquires a multi-objective structure \cite{Jahn2009VectorOptimization,Miettinen1999NonlinearMultiobjectiveOptimization}, where each output corresponds to a distinct objective and training seeks a parameter configuration that performs well across all of them. From this perspective, minimizing a global loss in a multi-output model can be interpreted as the search for a weak Pareto-optimal solution of an underlying multi-objective optimization problem. The shared representation learned by the network captures common latent factors influencing all outputs, leading to more efficient training and improved sample efficiency compared with training multiple independent single-output models.

These considerations also have implications for the computation of SHAP values in multi-output settings. As shown in Section \ref{sec:SHAP_math} and Appendix \ref{appendix:proofs:main:results}, feature attributions for mappings $f:\mathbb{R}^n \to \mathbb{R}^m$ decompose component-wise, independently of the statistical relationship between the outputs. Nevertheless, when outputs are strongly correlated, it is natural to adopt a joint multi-output architecture. While this modeling choice does not alter the definition of Shapley values, it allows both training and explanation procedures to be carried out within a single model, avoiding the duplication of computations inherent in independent single-output approaches.

\end{document}